\pdfoutput=1
\documentclass{article}
\usepackage[preprint]{neurips_2026}
\usepackage[T1]{fontenc}

\usepackage[utf8]{inputenc}
\usepackage{amsmath,amssymb,amsthm,graphicx,booktabs,xcolor,hyperref,url}
\hypersetup{colorlinks,linkcolor=blue!50!black,citecolor=blue!50!black,urlcolor=blue!50!black,
  pdftitle={Two-Sample Testing for Inhomogeneous Random Graphs in Non-Integral Lr Norms},pdfauthor={Soham Dan}}
\usepackage[section]{placeins}
\newtheorem{theorem}{Theorem}
\newtheorem{lemma}{Lemma}
\newcommand{\E}{\mathbb E}
\newcommand{\Prob}{\mathbb P}
\newcommand{\ind}{\mathbf 1}
\newcommand{\norm}[2]{\left\|#1\right\|_{#2}}

\title{Two-Sample Testing for Inhomogeneous\\ Random Graphs in Non-Integral $L_r$ Norms}
\author{Soham Dan}

\begin{document}
\maketitle

\begin{abstract}
Testing whether two populations of networks share the same edge probabilities is a basic problem in network inference. How hard it is depends on the norm used to measure the difference. For the inhomogeneous Erd\H{o}s--R\'enyi (IER) model, the optimal sample complexity is known for every integer $L_r$ norm and for $1\le r<2$. For non-integral $r>2$, however, the known upper and lower bounds do not match, and the lower bound was conjectured to be tight. We study this gap for two-sample testing on aligned vertices. We propose a test that runs two published statistics, of orders $2$ and $\lceil r\rceil$, on the same data and rejects if either one rejects. Its thresholds come from H\"older interpolation, so that both statistics have the same sample cost. We prove that this test attains the conjectured rate. Combined with earlier results, this shows that for every fixed $r\ge1$ the minimax sample complexity is of order $n^{\max\{4/r-1,\,2/r\}}/\epsilon^2$, even when the separation changes with $n$. In simulations with $n$ between 32 and 256, the number of graphs needed for 80\% power at level $0.05$ grows with $n$ at a rate consistent with the theory. For $r=2.5$, for example, the fitted exponent is $0.78$, against the theoretical value $0.8$. Interestingly, the two statistics split the work as the interpolation argument suggests: the higher-order statistic is more powerful when only a few edges change, and the $L_2$ statistic when many edges change.
\end{abstract}

\section{Introduction}
Many scientific questions ask whether two groups of networks differ, for instance brain networks recorded from two groups of subjects. When the vertices are aligned across networks and the edges are independent, every possible edge supplies repeated Bernoulli observations in each group, and the question becomes whether the two groups have the same edge probabilities. How hard this is depends on how a difference is measured. An entrywise $L_r$ norm, the $r$-th root of the sum of the $r$-th powers of the absolute differences in edge probabilities, weighs different kinds of change differently. With small $r$, many small changes spread over the network count heavily. With large $r$, a few large changes on a small set of edges dominate. A test that is reliable against every alternative at a given $L_r$ distance must therefore detect both diffuse and concentrated changes.

\citet{chatterjee2023} studied this problem for the inhomogeneous Erd\H{o}s--R\'enyi (IER) model and found the optimal sample complexity, that is, the number of graphs per group that the best test needs, in two cases. For integer $r\ge2$ it is of order $n^{2/r}/\epsilon^2$, and for every real $1\le r<2$ it is of order $n^{4/r-1}/\epsilon^2$, where $n$ is the number of vertices and $\epsilon$ is the separation between the two groups. For non-integral $r>2$, their Corollary~2.2 gives a lower bound of order $n^{2/r}/\epsilon^2$ and an upper bound of order $n^{4/r-2/\lceil r\rceil}/\epsilon^2$, and their Remark~2.2 conjectures that the lower bound gives the correct rate. The gap comes from converting every alternative into the next integer norm $\lceil r\rceil$, a step that loses a power of $n$. This leaves a natural research question: can a test detect every alternative at non-integral $L_r$ distance $\epsilon$ using only order $n^{2/r}/\epsilon^2$ graphs per group?

In this note, we answer this question positively. Rather than embedding the whole alternative class into one integer norm, we use interpolation between norms to split it between two tests. One test uses the $L_2$ statistic of \citet{chatterjee2023}, and the other uses their $L_d$ statistic with $d=\lceil r\rceil$. We choose the two thresholds so that both tests have the same sample cost. Rejecting when either test rejects then attains the conjectured rate, even when both statistics are computed on the same observations. For the lower bound, we use the dense and sparse random constructions of \citet{chatterjee2023}. Our proof is self-contained. Its bounds hold uniformly over the null and alternative classes, including alternatives that lie exactly on the boundary, and they allow integer rounding of the sample size and separations that change with $n$. Table~\ref{tab:rates} summarizes what was known and what is new.

\begin{table}[ht]
\centering
\caption{Sample complexity of two-sample testing in the aligned IER model, up to constants that depend on $r$ and the target error. Every entry is divided by $\epsilon^2$. The last row is the gap closed in this note. Here $\Theta(\cdot)$ means ``of the same order as'', while $\Omega(\cdot)$ and $O(\cdot)$ give lower and upper bounds of that order.}\label{tab:rates}
\begin{tabular}{lll}
\toprule
Norm & Known before & This note \\
\midrule
$1\leq r<2$ & $\Theta(n^{4/r-1})$ & unchanged \\
Integer $r\geq2$ & $\Theta(n^{2/r})$ & unchanged \\
Non-integral $r>2$ & $\Omega(n^{2/r})$ and $O(n^{4/r-2/\lceil r\rceil})$ & $\Theta(n^{2/r})$ \\
\bottomrule
\end{tabular}
\end{table}

We complement the theory with simulations. Using an exact simulator, we check that the number of graphs needed for 80\% power at level $0.05$ grows with $n$ at a rate consistent with Theorem~\ref{thm:main}, for $r\in\{1.5,2,2.5,3.5\}$. We also show how the two statistics share the work across alternatives that perturb different numbers of edges.

Our contribution is deliberately narrow. We reuse the statistics and lower-bound constructions of \citet{chatterjee2023}, and the only new ingredient is the interpolation-based choice of two thresholds. Beyond symmetry and independent aligned edges, no property of graphs enters the argument. The result is really a statement about comparing the mean vectors of two sets of independent Bernoulli observations indexed by unordered edges. Section~\ref{sec:model} states the setting and the main result, Section~\ref{sec:construction} explains the test and why it works, Section~\ref{sec:experiments} reports the simulations, and Section~\ref{sec:limitations} discusses related work and limitations.

\section{Setting and main result}\label{sec:model}
Let $\mathcal P_n$ be the set of symmetric $n\times n$ matrices with zero diagonal and off-diagonal entries in $[0,1]$. A matrix $P\in\mathcal P_n$ defines the model $\mathrm{IER}(P)$: a random graph on the vertices $1,\ldots,n$ in which each edge $\{i,j\}$ appears independently with probability $p_{ij}$. We observe $m$ independent graphs $G_1,\ldots,G_m\sim\mathrm{IER}(P)$ and $m$ independent graphs $H_1,\ldots,H_m\sim\mathrm{IER}(Q)$ on the same labelled vertices. We write
\[
 E_n=\{(i,j):1\leq i<j\leq n\},\quad M=|E_n|=n(n-1)/2,\quad N=2M
\]
for the set of unordered edges, its size, and the number of off-diagonal matrix entries. For finite $r\geq1$, we measure the difference between the two groups by the entrywise norm
\begin{equation}\label{eq:norm}
 \norm{P-Q}{r}^r=2\sum_{e\in E_n}|p_e-q_e|^r,
\end{equation}
where the factor two counts both symmetric entries, as in \citet{chatterjee2023}. We write $h_e=p_e-q_e$ and test
\[
 H_0:\ P=Q\qquad\text{against}\qquad H_1:\ \norm{P-Q}{r}\geq\epsilon.
\]
A test $\phi$ is a function of the data with values $1$ (reject $H_0$) and $0$ (accept $H_0$). We judge a test by its worst-case risk, the largest type~I error over the null plus the largest type~II error over the alternative:
\begin{equation}\label{eq:risk}
 R_{n,m,r,\epsilon}(\phi)=\sup_{P\in\mathcal P_n}\Prob_{P,P}(\phi=1)
 +\sup_{\substack{P,Q\in\mathcal P_n\\\norm{P-Q}{r}\geq\epsilon}}\Prob_{P,Q}(\phi=0),
 \qquad R^*_{n,m,r,\epsilon}=\inf_\phi R_{n,m,r,\epsilon}(\phi).
\end{equation}
The two suprema may be attained by different matrices. The minimax risk $R^*$ is the smallest worst-case risk that any test can achieve. For a fixed target $\beta\in(0,1)$, the sample complexity $m^*_\beta(n,r,\epsilon)$ is the smallest positive integer $m$ for which $R^*_{n,m,r,\epsilon}\leq\beta$. Put differently, it is the number of graphs per group that the best test needs to keep the sum of its two errors below $\beta$. Table~\ref{tab:notation} collects the main notation.

\begin{table}[ht]
\centering
\caption{Main notation.}\label{tab:notation}
\begin{tabular}{ll}
\toprule
Symbol & Meaning \\
\midrule
$n$, $m$ & number of vertices; number of graphs in each group \\
$M$, $N$ & number of unordered edges $n(n-1)/2$; number of off-diagonal entries $2M$ \\
$h_e=p_e-q_e$ & difference of the edge probabilities of edge $e$ \\
$\epsilon$ & separation in the $L_r$ norm \eqref{eq:norm} \\
$d=\lceil r\rceil$ & the next integer above a non-integral $r$ \\
$\lambda(r)$, $A_r$ & rate exponent $\max\{4/r-1,2/r\}$; rate $n^{\lambda(r)}/\epsilon^2$ \\
$\tau=m/A_r$ & number of graphs per group, in units of the rate \\
$a\asymp b$ & $a/b$ stays bounded away from $0$ and $\infty$ \\
\bottomrule
\end{tabular}
\end{table}

\begin{theorem}\label{thm:main}
Fix a finite $r\geq1$ and let $\epsilon_n>0$ be any sequence such that
\[
 A_r=\frac{n^{\lambda(r)}}{\epsilon_n^2}\longrightarrow\infty,
 \qquad \lambda(r)=\max\{4/r-1,2/r\},\qquad n\longrightarrow\infty.
\]
Then the alternative class is nonempty for all large $n$, and
\begin{align}
 m/A_r\longrightarrow\infty&\quad\Longrightarrow\quad R^*_{n,m,r,\epsilon_n}\longrightarrow0,\label{eq:upper}\\
 m/A_r\longrightarrow0&\quad\Longrightarrow\quad R^*_{n,m,r,\epsilon_n}\longrightarrow1.\label{eq:lower}
\end{align}
Moreover, for every fixed $\beta\in(0,1)$ there are constants $0<c<C$, depending only on $r$ and $\beta$, such that $cA_r\leq m^*_\beta(n,r,\epsilon_n)\leq CA_r$ for all large $n$.
\end{theorem}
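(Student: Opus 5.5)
We prove the upper bound \eqref{eq:upper} and the lower bound \eqref{eq:lower} separately; the sample-complexity sandwich $cA_r\le m^*_\beta\le CA_r$ then follows from their non-asymptotic versions. First, nonemptiness: the largest possible separation is $\norm{P-Q}{r}=N^{1/r}$, attained by $P=\mathbf 1\mathbf 1^\top-I$ and $Q=0$. Since $A_r\to\infty$ forces $\epsilon_n^2=o(n^{\lambda(r)})$, and $\lambda(r)\le 4/r$ for $r\ge1$ (note $2/r\le 4/r$ and $4/r-1\le4/r$), we get $\epsilon_n=o(n^{2/r})\le N^{1/r}$ for large $n$.

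\emph{Upper bound.} For $1\le r<2$ and for integer $r\ge2$ we quote the tests of \citet{chatterjee2023}. Their error bounds are explicit Chebyshev bounds that hold uniformly over the classes, so we only need to check that they allow $\epsilon_n$ to vary with $n$.

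The new case is non-integral $r>2$, with $d=\lceil r\rceil$ and $\lambda=2/r$. We use the unbiased $U$-statistics $T_2$ and $T_d$ of \citet{chatterjee2023}, with $\E T_k=\sum_e h_e^k$ (for $k=2$, and for even $d$; for odd $d$ we use their absolute-value version). Their variance bounds are of the form
\[
\mathrm{Var}(T_k)\lesssim \frac{M}{m^{k}}+\sum_{j=1}^{k-1}\frac{\sum_e |h_e|^{2k-j}}{m^{j}}.
\]
Writing $S_k=\sum_e|h_e|^k$, Hölder interpolation with $2<r<d$ gives
\[
S_r\le S_2^{\theta}S_d^{1-\theta},\qquad \frac1r=\frac{\theta}{2}+\frac{1-\theta}{d}.
\]
So $S_r\ge\epsilon^r/2$ forces either $S_2\ge t_2$ or $S_d\ge t_d$, whenever the thresholds satisfy $t_2^\theta t_d^{1-\theta}=\epsilon^r/2$. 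We choose $t_2$ and $t_d$ so that the null standard deviations match the thresholds at equal sample cost: $\sqrt M/m\asymp t_2$ and $\sqrt M/m^{d/2}\asymp t_d$. Solving with $m=\tau A_r$ should give exactly $m\asymp n^{2/r}/\epsilon^2$ times a constant, and an error bound that tends to $0$ as $\tau\to\infty$.

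The test rejects if $T_2>t_2/2$ or $T_d>t_d/2$. Type I error is bounded by a union bound using the null variance. Under the alternative, whichever $S_k\ge t_k$ holds, Chebyshev gives $\Prob(T_k\le t_k/2)\le 4\mathrm{Var}(T_k)/t_k^2$. The cross terms are controlled using $S_{2k-j}\le S_k^{(2k-j)/k}$ (valid since $|h_e|\le1$ and $2k-j\ge k$), which reduces each to a power of $S_k/t_k$ times a power of $\tau^{-1}$. That the statistics share data is harmless, because the union bound never needs independence.

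\emph{Lower bound and main obstacle.} For \eqref{eq:lower} we use the dense and sparse random-perturbation constructions of \citet{chatterjee2023}. The dense one perturbs all edges by $\pm\epsilon N^{-1/r}$ around $1/2$ and gives $n^{4/r-1}/\epsilon^2$. The sparse one plants a random set of about $K$ edges with perturbation $\delta$ and gives $n^{2/r}/\epsilon^2$. For each, we bound the $\chi^2$ divergence between the null and the mixture, and show it is $o(1)$ when $m/A_r\to0$; Le Cam's two-point method then gives $R^*\to1$.

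The step I expect to be delicate is keeping everything uniform and non-asymptotic: the mixture must land in the alternative with probability tending to one even at the boundary, or be made to do so exactly by construction, and $m$ must be rounded to an integer. The part I would check most carefully is the variance cross terms for $T_d$ with odd $d$, where the absolute-value statistic may have a messier variance. If that fails, I would instead use the even integer $d'=2\lceil r/2\rceil$ and interpolate between $2$ and $d'$. This also gives rate $n^{2/r}$, since the exponent comes out independent of $d$ once the costs are equalized.
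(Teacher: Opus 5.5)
Your architecture is the paper's: the order-$2$ and order-$\lceil r\rceil$ statistics on shared data, thresholds balanced by H\"older interpolation, Chebyshev plus a union bound, and the dense and sparse second-moment constructions for the lower bound. However, the only new part, the upper bound for non-integral $r>2$, is wrong as written in three places.

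First, your interpolation inequality $S_r\le S_2^{\theta}S_d^{1-\theta}$ with $1/r=\theta/2+(1-\theta)/d$ is false. Power sums are log-convex in the exponent itself, so the valid statement uses $r=2\theta+d(1-\theta)$, i.e.\ $\theta=(d-r)/(d-2)$. Your relation between $1/r$, $1/2$ and $1/d$ belongs to the norm form $\|h\|_r\le\|h\|_2^{\alpha}\|h\|_d^{1-\alpha}$, which is what Lemma~\ref{lem:interpolation} uses. For a counterexample, take a single nonzero entry $a\in(0,1)$ with $r=5/2$ and $d=3$. Your $\theta$ is then $2/5$, and the right-hand side is $a^{13/5}<a^{5/2}$. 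This is not cosmetic. With $t_2\asymp\sqrt M/m$ and $t_d\asymp\sqrt M/m^{d/2}$, the constraint $t_2^{\theta}t_d^{1-\theta}=\epsilon^r/2$ reads $m^{\theta+d(1-\theta)/2}\asymp\sqrt M/\epsilon^r$. Only the corrected $\theta$ makes this exponent exactly $r/2$ and yields $m\asymp M^{1/r}/\epsilon^2\asymp n^{2/r}/\epsilon^2$. Your $\theta$ gives $13/10$ instead of $5/4$ in the example.

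Second, the variance form you quote is too small. For the block statistic, and likewise for a full $U$-statistic, the edge variance is at most $\sum_{j=1}^k\binom kj v_e^j h_e^{2k-2j}$ with $v_e\le(2K)^{-1}$. So the $j$-th term is $\sum_e|h_e|^{2k-2j}/m^j$, not $\sum_e|h_e|^{2k-j}/m^j$; already for $k=2$ the linear term is of order $S_2/m$, not $S_3/m$. For $k=d\ge3$ and $k/2<j<k$, the exponent $q=2k-2j$ is below $k$, so monotonicity of $\ell_p$ norms does not apply. You need $S_q\le M^{1-q/k}S_k^{q/k}$, together with $2j/k-1\le j/k$, to absorb the resulting factor $M^{2j/k-1}$ into $(M^{1/k}/(K\eta^2))^j$. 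This is exactly the paper's $\ell_p$ step in Appendix~\ref{app:proofs}.

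Third, the Chebyshev bound $4\operatorname{Var}(T_k)/t_k^2$ is not uniformly small over the alternative, because the variance grows with $S_k$. Take the balanced threshold $t_2\asymp\epsilon^2N^{1/2-1/r}$ and the alternative $p_e=5/8$, $q_e=3/8$ on every edge, which lies in the class. There the $j=1$ term alone is of order $A_r/\tau$, which need not tend to zero. Use instead $\E T_k-t_k/2\ge S_k/2$ and the bound $4\operatorname{Var}(T_k)/S_k^2$. This leaves only negative powers of $S_k$, which are then controlled by $S_k\ge t_k$. With these three corrections your argument becomes the paper's proof.
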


Informally, $A_r$ is, up to constant factors, the number of graphs per group that the problem requires. With many more than $A_r$ graphs, some test makes both errors small, and with many fewer, even the best test does little better than guessing. The exponent $\lambda(r)$ equals $4/r-1$ for $r\leq2$ and $2/r$ for $r\geq2$, so the problem becomes easier as $r$ grows. The proof is in Appendix~\ref{app:proofs}. Only the upper bound for non-integral $r>2$ is new, and the other rates and both lower-bound constructions are due to \citet{chatterjee2023}. We treat $r$ as fixed and do not track sharp constants, so growing $r$, the supremum norm, and regimes in which $A_r$ stays bounded are outside the scope of Theorem~\ref{thm:main}.

\section{The test and why it works}\label{sec:construction}
\subsection{The integer-order statistic}
\citet{chatterjee2023} detect differences in the $L_s$ norm, for an integer $s\geq2$, with the following statistic. Assume $m\geq s$, put $K_s=\lfloor m/s\rfloor$, and split the first $sK_s$ graph pairs into $s$ consecutive blocks of $K_s$ pairs each. Any remaining pairs are not used. For block $\ell$ and edge $e$, let
\[
 Y_{\ell,e}=\frac1{K_s}\sum_{t=(\ell-1)K_s+1}^{\ell K_s}(G_{t,e}-H_{t,e}),\qquad
 U_s=\sum_{e\in E_n}\begin{cases}
 \prod_{\ell=1}^sY_{\ell,e},&s\ \text{even},\\
 (\prod_{\ell=1}^{s-1}Y_{\ell,e})|Y_{s,e}|,&s\ \text{odd}.
 \end{cases}
\]
We write $W_e$ for the term of edge $e$. Up to the factor $K_s^s$, $U_s$ is the statistic of \citet{chatterjee2023}.

The construction of $U_s$ rests on one observation. Each block mean $Y_{\ell,e}$ is an unbiased estimate of $h_e$, and block means from different blocks are independent, so the product of $s$ of them has expectation $h_e^s$. Summing over edges therefore estimates $\sum_e h_e^s$, which for even $s$ is half of $\norm{P-Q}{s}^s$. For odd $s$, the term $h_e^s$ is negative when $h_e<0$, so positive and negative differences could cancel in the sum. The last factor is therefore replaced by its absolute value. Only the last factor is changed, because replacing every factor by its magnitude would give a nonzero mean under the null. In both cases, block independence gives $\E W_e\geq|h_e|^s$, with equality for even $s$, and $\E W_e=0$ when $h_e=0$. The second moment is
\[
 \E W_e^2=(h_e^2+v_e)^s,\qquad
 v_e=\{p_e(1-p_e)+q_e(1-q_e)\}/K_s\leq(2K_s)^{-1},
\]
where $v_e$ is the variance of one block mean.

Given a separation $\eta>0$, the test rejects when $U_s$ exceeds a quarter of $\eta^s$:
\begin{equation}\label{eq:integer-test}
 \psi_{s,\eta}=\ind\{U_s\geq\eta^s/4\}.
\end{equation}
Under the null, $U_s$ has mean zero, and under an alternative with $\norm{P-Q}{s}\geq\eta$ its mean is at least $\eta^s/2$, so the threshold sits halfway between the two. Chebyshev's inequality, applied under the null and under the alternative with the moments above, gives
\begin{equation}\label{eq:integer-main-bound}
 R_{n,m,s,\eta}(\psi_{s,\eta})\leq C_s\sum_{j=1}^s
 \left(\frac{M^{1/s}}{K_s\eta^2}\right)^j,
\end{equation}
where the constant $C_s$ depends only on $s$. We call $M^{1/s}/\eta^2$ the sample cost of the test: the risk is small once $K_s$, and hence $m$, is large compared with it. Because the argument uses only means and variances, the bound holds uniformly over all matrices, including probabilities equal to zero or one and alternatives exactly on the boundary.

\subsection{Splitting the alternatives between two tests}
The key step is an interpolation inequality between norms.
\begin{lemma}\label{lem:interpolation}
Let $2<r<d$, and set
\[
 \alpha=\frac{1/r-1/d}{1/2-1/d},\qquad
 \eta_2=\epsilon N^{1/4-1/(2r)},\qquad
 \eta_d=\epsilon N^{1/(2d)-1/(2r)}.
\]
Every vector $x\in\mathbb R^N$ with $\norm{x}{r}\geq\epsilon$ satisfies
$\norm{x}{2}\geq\eta_2$ or $\norm{x}{d}\geq\eta_d$.
\end{lemma}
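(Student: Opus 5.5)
The plan is to use the log-convexity of $\ell_p$ norms (Hölder interpolation) in the form
\[
\norm{x}{r}\leq\norm{x}{2}^{\alpha}\,\norm{x}{d}^{1-\alpha},\qquad \frac1r=\frac{\alpha}{2}+\frac{1-\alpha}{d},
\]
and then argue by contradiction. Here $\norm{\cdot}{p}$ is the plain $\ell_p$ norm on $\mathbb R^N$. This agrees with \eqref{eq:norm}, because the factor two there is exactly the count of the $N=2M$ off-diagonal entries.

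First I check that the $\alpha$ in the statement is the interpolation weight. Rearranging $\alpha(1/2-1/d)=1/r-1/d$ gives $1/r=\alpha/2+(1-\alpha)/d$. Since $2<r<d$, we have $0<1/r-1/d<1/2-1/d$, so $\alpha\in(0,1)$ strictly.

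Next I prove the interpolation inequality. Write $|x_i|^r=|x_i|^{r\alpha}\,|x_i|^{r(1-\alpha)}$ and apply Hölder with exponents $p=2/(r\alpha)$ and $q=d/(r(1-\alpha))$. These are conjugate, since $1/p+1/q=r(\alpha/2+(1-\alpha)/d)=1$, and both exceed $1$ because $\alpha\in(0,1)$. Hölder then gives
\[
\sum_i|x_i|^r\leq\Bigl(\sum_i|x_i|^2\Bigr)^{r\alpha/2}\Bigl(\sum_i|x_i|^d\Bigr)^{r(1-\alpha)/d},
\]
and taking $r$-th roots yields the displayed bound.

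For the contradiction, suppose $\norm{x}{r}\geq\epsilon$ but $\norm{x}{2}<\eta_2$ and $\norm{x}{d}<\eta_d$. Because $\alpha$ and $1-\alpha$ are both positive, the strict inequalities survive raising to these powers. Hence
\[
\norm{x}{r}<\eta_2^{\alpha}\eta_d^{1-\alpha}=\epsilon\,N^{\gamma},\qquad \gamma=\alpha\Bigl(\tfrac14-\tfrac1{2r}\Bigr)+(1-\alpha)\Bigl(\tfrac1{2d}-\tfrac1{2r}\Bigr).
\]
Collecting terms, $\gamma=\tfrac12\bigl(\tfrac{\alpha}{2}+\tfrac{1-\alpha}{d}\bigr)-\tfrac1{2r}=\tfrac1{2r}-\tfrac1{2r}=0$. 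So $\norm{x}{r}<\epsilon$, which is a contradiction.

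There is no real obstacle here. The only points needing care are the strictness of $\alpha\in(0,1)$, which uses $2<r<d$ and ensures the strict inequalities are preserved, and the exponent bookkeeping showing $\gamma=0$. That cancellation is precisely why $\eta_2$ and $\eta_d$ were chosen: it makes the two sample costs $N^{1/2}/\eta_2^2$ and $N^{1/d}/\eta_d^2$ both equal $N^{1/r}/\epsilon^2$.
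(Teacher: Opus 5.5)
Your proposal is correct and matches the paper's proof: you use the same H\"older interpolation $\|x\|_r\le\|x\|_2^{\alpha}\|x\|_d^{1-\alpha}$ with the same conjugate exponents $2/(\alpha r)$ and $d/((1-\alpha)r)$. You then use the identity $\alpha/2+(1-\alpha)/d=1/r$ to get $\eta_2^\alpha\eta_d^{1-\alpha}=\epsilon$ and conclude by contradiction. Your explicit checks that $\alpha\in(0,1)$ and that the exponent $\gamma$ vanishes simply spell out steps the paper states briefly.
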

\begin{proof}
H\"older's inequality (Appendix~\ref{app:proofs}) gives $\norm{x}{r}\leq\norm{x}{2}^{\alpha}\norm{x}{d}^{1-\alpha}$.
Since $\alpha/2+(1-\alpha)/d=1/r$, the thresholds satisfy
$\eta_2^\alpha\eta_d^{1-\alpha}=\epsilon$. If both $\norm{x}{2}<\eta_2$ and $\norm{x}{d}<\eta_d$, the right-hand side would be strictly smaller than $\epsilon$, which contradicts $\norm{x}{r}\geq\epsilon$.
\end{proof}

Lemma~\ref{lem:interpolation} has a concrete meaning. Consider a difference spread evenly over $k$ unordered edges, so that $x$ has $2k$ nonzero entries of equal size and $\norm{x}{r}=\epsilon$. A short calculation gives
\[
 \frac{\norm{x}{2}}{\eta_2}=\Big(\frac{2k}{\sqrt N}\Big)^{1/2-1/r},\qquad
 \frac{\norm{x}{d}}{\eta_d}=\Big(\frac{2k}{\sqrt N}\Big)^{1/d-1/r}.
\]
Since $r>2$ and $d>r$, the first ratio is at least one exactly when $2k\geq\sqrt N$, and the second exactly when $2k\leq\sqrt N$. The $L_2$ test is therefore responsible for diffuse differences on more than about $n/2$ edges, and the $L_d$ test for concentrated differences on fewer edges (Figure~\ref{fig:interpolation}). The two ratios meet at $k=\sqrt N/2\approx n/2$, which is essentially the support size of the sparse lower-bound construction in Section~\ref{sec:lower}, suggesting that the hardest alternatives sit where the two tests hand over.

\begin{figure}[t]
\centering\includegraphics[width=.6\linewidth]{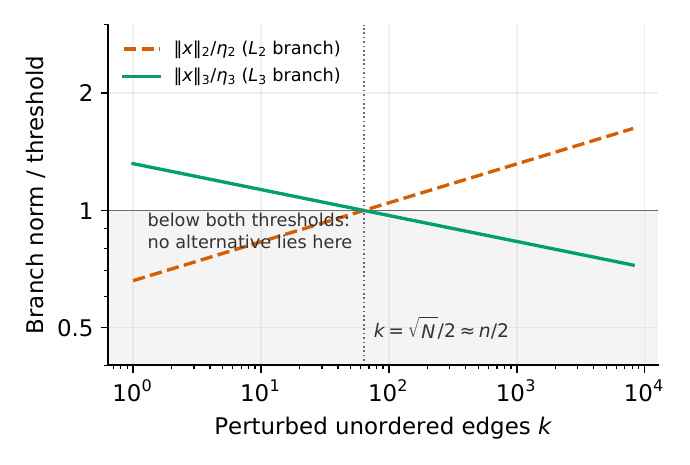}
\caption{Which test is responsible for which alternative, for $n=128$ and $r=2.5$ ($d=3$). Each alternative spreads a difference of $L_r$ norm $\epsilon$ evenly over $k$ edges. Above the horizontal line, a test faces an alternative at least as large as its threshold. At every $k$ at least one of the two tests does, and the hand-over happens at $k=\sqrt N/2$.}
\label{fig:interpolation}
\end{figure}

We apply the lemma to the $N$ off-diagonal entries of $P-Q$. For non-integral $r>2$, we take $d=\lceil r\rceil$ and define the union test
\begin{equation}\label{eq:union}
 \phi_{r,\epsilon}=\psi_{2,\eta_2}\vee\psi_{d,\eta_d},
\end{equation}
which rejects when either test rejects. With the thresholds of Lemma~\ref{lem:interpolation}, both tests have the same sample cost in~\eqref{eq:integer-main-bound}:
\begin{equation}\label{eq:balance}
 \frac{M^{1/2}}{\eta_2^2}=2^{-1/2}\frac{N^{1/r}}{\epsilon^2},
 \qquad \frac{M^{1/d}}{\eta_d^2}=2^{-1/d}\frac{N^{1/r}}{\epsilon^2}.
\end{equation}
Under the null, the type~I error of the union is at most the sum of the two type~I errors. Under any alternative, Lemma~\ref{lem:interpolation} guarantees that at least one of the two tests faces an alternative in its own norm, and the union accepts only if that test also accepts. Together with~\eqref{eq:integer-main-bound}, \eqref{eq:balance}, the bound $K_s\geq m/(2s)$, and $N\leq n^2$, these two facts prove~\eqref{eq:upper}, as detailed in Appendix~\ref{app:proofs}. Neither fact requires the two statistics to be independent, so they can be computed on the same observations. The same calculation works for any fixed integer $d>r$, and the ceiling is simply the smallest choice.

\begin{center}
\fbox{\begin{minipage}{0.94\linewidth}
\textbf{The union test for a non-integral $r>2$.}
Given $G_{1:m}$, $H_{1:m}$, and a separation $\epsilon>0$:
\begin{enumerate}
\item Set $d=\lceil r\rceil$ and compute the thresholds $\eta_2$ and $\eta_d$ of Lemma~\ref{lem:interpolation}. This requires $m\geq d$.
\item Compute $U_2$ and $U_d$ on the same observations, each with its own consecutive equal-sized blocks and discarding its own leftover pairs.
\item Reject $H_0$ if $U_2\geq\eta_2^2/4$ or $U_d\geq\eta_d^d/4$.
\end{enumerate}
\end{minipage}}
\end{center}

For $1\leq r<2$, we use $s=2$ with $\eta=\epsilon N^{1/2-1/r}$, and for integer $r\geq2$ we use~\eqref{eq:integer-test} directly with $s=r$ and $\eta=\epsilon$. When $r=2$ this is a single test. Rounding affects only constants, because $K_s\geq m/(2s)$ whenever $m\geq s$.

\subsection{Why no test can do better}\label{sec:lower}
The lower bounds come from two random constructions of \citet{chatterjee2023}. In each one, $P$ is drawn at random in a way that makes it hard to tell apart from $Q$ unless there are enough graphs. We write $L$ for the likelihood ratio of the data under the random alternative relative to the null, and $\E_0$ for expectation under the null. In the dense construction, $Q_e=1/2$ and $P_e=1/2+\gamma_e\delta$ with independent uniform random signs $\gamma_e$ and $\delta=\epsilon N^{-1/r}$. Its likelihood ratio has the exact second moment
\[
 \E_0L^2=\left\{\frac{(1+4\delta^2)^m+(1-4\delta^2)^m}{2}\right\}^{M},
 \qquad \log\E_0L^2\leq8Mm^2\delta^4.
\]
For $r\leq2$, $\E_0L^2$ approaches one when $m\epsilon^2/n^{4/r-1}\to0$. In the sparse construction, used for $r\geq2$, a uniformly random set of $k=\lfloor n/2\rfloor$ unordered edges receives the positive shift $\delta=\epsilon(2k)^{-1/r}$. If $Z$ is the number of edges shared by two independent random supports and $b=(1+4\delta^2)^m$, then
\[
 \E_0L^2=\E b^Z\leq\exp\{(k^2/M)(b-1)\}.
\]
Here $Z$ has a hypergeometric distribution. Expanding $\E b^Z$ in factorial moments of $Z$ gives the bound, which approaches one when $m\epsilon^2/n^{2/r}\to0$. When $\E_0L^2$ is close to one, the data have nearly the same distribution under the null and under the random alternative. Formally, the two error probabilities of any test sum to at least one minus the total-variation distance between these two distributions, which is the largest difference between the probabilities they assign to a single event. This distance is at most $\tfrac12\sqrt{\E_0L^2-1}$, so $R^*\geq1-\tfrac12\sqrt{\E_0L^2-1}$, which proves~\eqref{eq:lower}. Keeping the nonlinear form of both bounds also gives the constants for a fixed target error $\beta$. The condition $A_r\to\infty$ ensures that the perturbed probabilities stay in $[0,1]$ for large $n$.

\section{Simulations}\label{sec:experiments}
We use simulations to check the implementation of the union test on shared data and to see whether the number of graphs needed grows with $n$ at the rate of Theorem~\ref{thm:main}. We also use them to show how the two statistics share the work across different alternatives. Simulations cannot prove a minimax statement, so we treat them as a check on the implementation and an illustration of the theory.

\subsection{Setup}
\paragraph{Exact simulation.}
The sum of an edge's indicators over $c$ graphs is a binomial count, so we simulate block sums directly as independent binomial counts. The $s=2$ and $s=d$ statistics use different block boundaries, so we cut the observation indices at every boundary of either statistic, draw binomial counts on the resulting segments, and add them up for each statistic. This reproduces exactly the joint distribution of the two statistics computed on the same graphs, even when $m$ is not divisible by either block count. Drawing separate blocks for each statistic would instead simulate two independent experiments.

\paragraph{Alternatives and design.}
We use the two lower-bound constructions of Section~\ref{sec:lower}, both with background probability $1/2$: dense random-sign shifts for $r\leq2$, and a hidden support of $k=\lfloor n/2\rfloor$ positively shifted edges for $r\geq2$. The signs or supports are redrawn in every replicate, so the reported power is averaged over the construction. The null is $P=Q=1/2$. The main study has 324 cells, 286 for scaling and 38 for the support study of Section~\ref{sec:branches}. For scaling, we use $n\in\{32,64,128,256\}$, $r\in\{1.5,2,2.5,3.5\}$ and $\epsilon=0.75$, and we choose $m$ on a grid of $\tau=m\epsilon^2/n^{\lambda(r)}=m/A_r$ from $1/4$ to $32$, which gives between 7 and 587,166 graphs per group. Each cell has 500 replicates, except 32 null cells with 1,000.

\paragraph{Setting the threshold.}
We call the test with the thresholds~\eqref{eq:integer-test}--\eqref{eq:union} the theorem test. Its thresholds depend on $\epsilon$ and are designed to make the sum of the two errors small, but they do not fix the type~I error at a nominal level such as $5\%$. For comparisons at equal size, we also use an oracle calibration. It estimates the null distribution of each statistic from 999 independent simulations of the null used in the experiment and rejects at level $0.05$. For the union, the level is split equally between the two statistics (Bonferroni). This calibration uses the known null, so it is available only in simulations. Section~\ref{sec:limitations} describes a feasible calibration that is valid in finite samples. Finally, the practical normal calibration of \citet{chatterjee2023}, which standardizes each statistic by an estimate of its null variance, is reported in Appendix~\ref{app:results}. Calibration and evaluation always use independent random draws.

\subsection{How the number of graphs needed scales with \texorpdfstring{$n$}{n}}\label{sec:scaling}
Figure~\ref{fig:scaling} shows the rejection rate of the union test at level $0.05$ as a function of $\tau=m/A_r$. For every $r$, the power curves of the four graph sizes nearly coincide, even though at the same $\tau$ the number of graphs for $n=256$ is between about 3 and 32 times that for $n=32$. The null rejection stays between $0.02$ and $0.08$. This is what Theorem~\ref{thm:main} predicts: the number of graphs needed scales as $A_r$.

\begin{figure}[ht]
\centering\includegraphics[width=\linewidth]{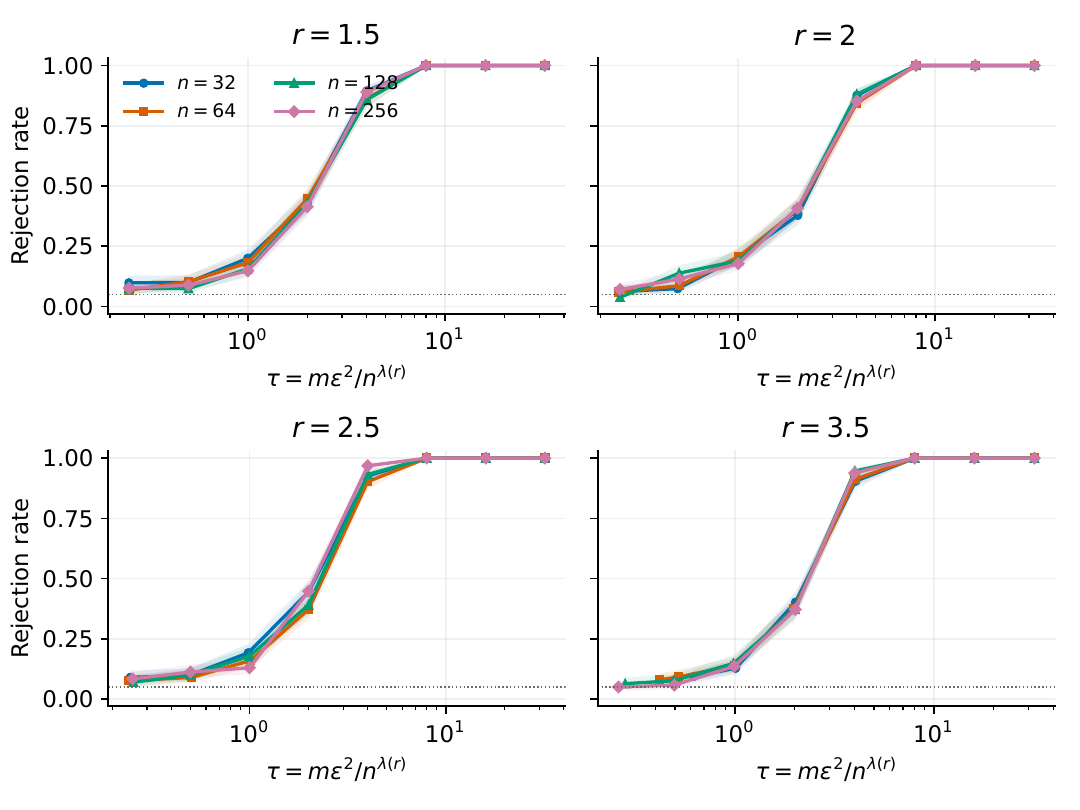}
\caption{Rejection rate of the union test at level $0.05$ (oracle calibration) against $\tau=m/A_r$, with $\epsilon=0.75$. Solid lines with filled markers show power under the dense construction ($r\leq2$) or the hidden support of $k=\lfloor n/2\rfloor$ edges ($r>2$). Dashed lines with open markers show the null $P=Q=1/2$, which stays near the dotted line at $0.05$. Colors and markers identify $n$. Bands are pointwise 95\% Wilson intervals over 500 replicates (1,000 for the null at $\tau\approx2$ and $8$), conditional on the calibration sample.}
\label{fig:scaling}
\end{figure}

To quantify the scaling, we find for each $n$ the number of graphs $m^*$ at which power first reaches $0.8$. We use a monotone (isotonic) fit to the power curve and log-linear interpolation, and then regress $\log m^*$ on $\log n$. Table~\ref{tab:slopes} shows that the fitted slopes are close to $\lambda(r)$ in all five series, and every 95\% bootstrap interval contains $\lambda(r)$. The same holds for the theorem test itself when we track one minus the sum of its null rejection rate and its miss rate, instead of its power alone. At the grid point nearest $\tau=8$, the theorem test has power between $0.996$ and $1.000$ on the non-integral cells, with null rejection between $0.000$ and $0.006$.

\begin{table}[ht]
\centering
\caption{Fitted slope of $\log m^*$ against $\log n$ over $n=32,\ldots,256$, where $m^*$ is the number of graphs per group at which the tracked quantity first reaches $0.8$ ($\epsilon=0.75$). Brackets give 95\% percentile intervals from 400 bootstrap samples that resample evaluation and calibration replicates within every cell. The last column tracks one minus the sum of the null rejection rate and the miss rate of the theorem test. This analysis was added after the prespecified one in Appendix~\ref{app:transitions}.}\label{tab:slopes}
\begin{tabular}{llcll}
\toprule
$r$ & Construction & $\lambda(r)$ & Power at level $0.05$ & Theorem test, $1-$error sum \\
\midrule
1.5 & dense  & 1.667 & 1.672 [1.641, 1.724] & 1.663 [1.621, 1.703] \\
2   & dense  & 1.000 & 1.003 [0.969, 1.041] & 1.017 [0.963, 1.076] \\
2   & sparse & 1.000 & 0.981 [0.948, 1.011] & 0.974 [0.911, 1.026] \\
2.5 & sparse & 0.800 & 0.776 [0.752, 0.804] & 0.792 [0.763, 0.823] \\
3.5 & sparse & 0.571 & 0.552 [0.528, 0.574] & 0.572 [0.539, 0.604] \\
\bottomrule
\end{tabular}
\end{table}

We planned a different analysis in advance, and we report it in Appendix~\ref{app:transitions}. It tracked the power of the theorem test alone, whose null rejection falls from between $0.34$ and $0.75$ at $\tau\leq1$ to nearly zero at $\tau\geq8$. Its power therefore reaches $0.8$ partly through false rejections, which produces a misleading slope of $1.07$ for $r=2.5$ and a censored crossing for $r=3.5$. Tracking power at a fixed level, or the error sum, removes this effect.

\subsection{How the two statistics share the work}\label{sec:branches}
To see the mechanism of Lemma~\ref{lem:interpolation} in data, we fix $n=128$ and the sample size, with $m=172$ for $r=2.5$ and $m=57$ for $r=3.5$ (both near $\tau=2$). We then vary the number $k$ of perturbed edges while keeping $\norm{P-Q}{r}=0.75$, so that fewer edges receive larger shifts. We use both positive shifts and shifts with independent random signs. The two $r=3.5$ cells with $k=2$ are infeasible, because the shift would exceed $1/2$, and are omitted rather than clipped. Figure~\ref{fig:branches} compares, at level $0.05$, the $L_2$ statistic alone, the higher-order statistic alone, and their union.

\begin{figure}[ht]
\centering\includegraphics[width=\linewidth]{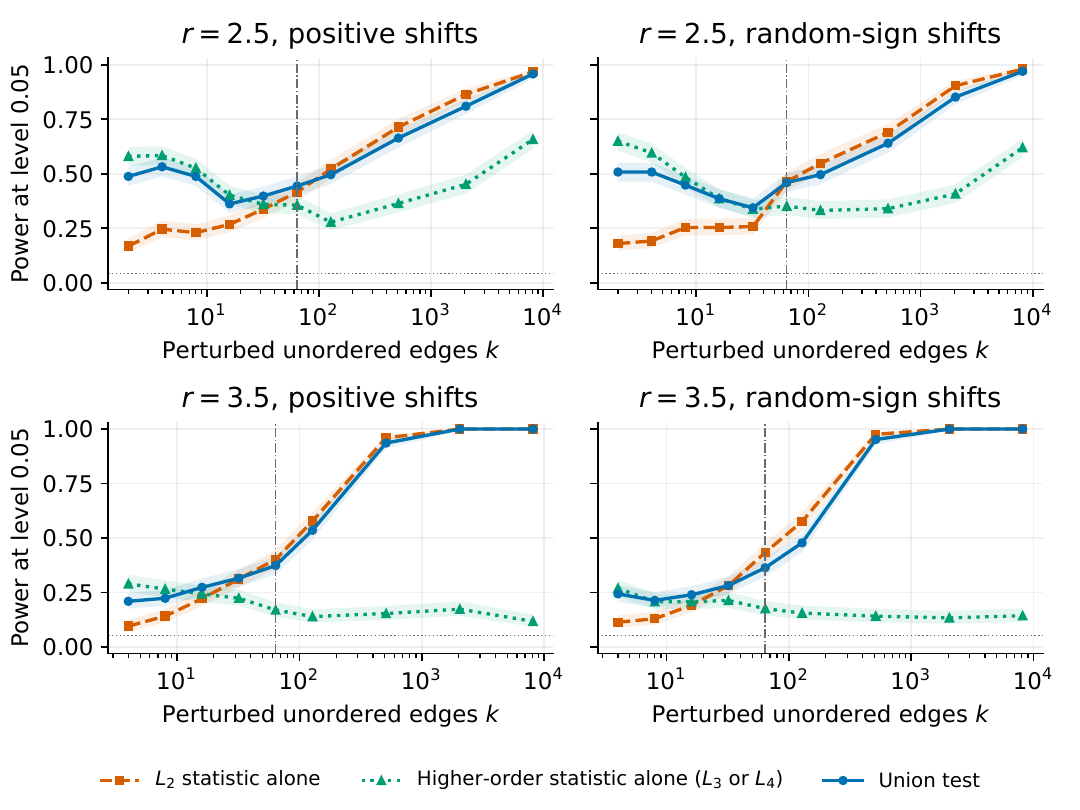}
\caption{Power at level $0.05$ (oracle calibration) against the number $k$ of perturbed edges, at $n=128$, $\epsilon=0.75$, and $m=172$ ($r=2.5$) or $m=57$ ($r=3.5$). The dash-dotted vertical line marks the hand-over point $k=\sqrt N/2$ of Figure~\ref{fig:interpolation}, and the dotted horizontal line is the null rejection rate of the union. Each alternative cell has 500 replicates and the null cell has 1,000. Bands are pointwise 95\% Wilson intervals conditional on the calibration sample.}
\label{fig:branches}
\end{figure}

The higher-order statistic is more powerful when only a few edges change, and the $L_2$ statistic when many edges change. The crossover lies between $k=32$ and $k=64$ for $r=2.5$ and between $k=16$ and $k=32$ for $r=3.5$, for both kinds of shift, close to the hand-over point $k\approx64$ of Figure~\ref{fig:interpolation}. The union follows the better statistic, up to the price of splitting the level between the two. For $r=2.5$ and positive shifts on $k=2$ edges, for example, the union has power $0.49$, against $0.58$ for the $L_3$ statistic alone and $0.17$ for the $L_2$ statistic alone. At the theorem thresholds, by contrast, the union rejects the null in $43.1\%$ ($r=2.5$) and $50.7\%$ ($r=3.5$) of replicates at these sample sizes. The same comparison at the theorem thresholds (Appendix~\ref{app:results}) therefore mostly reflects differences in null rejection.

\subsection{Further checks}
Appendix~\ref{app:results} reports further checks. They include three structured null models (Erd\H{o}s--R\'enyi, planted bisection, and a logistic $\beta$-model, with 68 cells at $n=50$ and $m=48$). They also include the practical normal calibration, a version of the union that splits the observations between the two statistics, and numerical evaluations of the lower bounds.

\section{Related work and limitations}\label{sec:limitations}
\citet{ghoshdastidar2020} established sparsity-sensitive minimax results for two-sample IER testing in the Frobenius and operator norms. \citet{dan2020} studied the related goodness-of-fit problem, in which the reference matrix is known. The integer statistics, their practical standardization, the result for real $r<2$, and both lower-bound constructions used here come from \citet{chatterjee2023}.

Rates for testing a known null are closely related. \citet{chhor2022} established local goodness-of-fit rates for multivariate binomial observations for $r\in[1,2]$. Their rates depend on the specified null vector, and they do not cover non-integral $r>2$. In the Gaussian sequence model the picture is older: Lemma~10 of \citet{kania2026} gives a simple-null separation of order $\sigma D^{1/(2r)}$ for fixed non-even $r>2$ and credits the rate to \citet{ingster2001}. Heuristically, setting $D\asymp n^2$ and $\sigma\asymp m^{-1/2}$ gives the same exponent as Theorem~\ref{thm:main}. For discrete distributions, \citet{waggoner2015} studied $L_p$ testing and \citet{balakrishnan2019} derived sharp local minimax rates for high-dimensional multinomials. \citet{jia2025} study goodness-of-fit testing in the Gaussian sequence model when the mean lies in an orthosymmetric set, such as an $\ell_p$ body. There the $\ell_p$ norm constrains the parameter and the separation is measured in $\ell_2$, whereas here the $L_r$ norm defines the separation itself.

A randomization argument shows that the unknown common null $P=Q$ is not what separates our problem from testing a known null. For each graph index $t$ and edge $e$, take $G_{t,e}$ or $1-H_{t,e}$ with probability one half each. The result is a Bernoulli variable with mean $1/2+(p_e-q_e)/2$. Flipping it independently with probability $1/4$ changes the mean to $1/2+(p_e-q_e)/4\in[1/4,3/4]$. This gives a problem with the known null mean $1/2$ and separation $\epsilon/4$, using no additional observations. The reduction does not by itself provide the goodness-of-fit bound for $r>2$ that would then be needed. Our proof instead works directly with the IER statistics, without a Gaussian approximation or any growth condition beyond $A_r\to\infty$.

\paragraph{A feasible calibration that is valid in finite samples.}
The theorem thresholds depend on $\epsilon$, and the oracle calibration needs the null. A calibration that is valid in finite samples is nevertheless available. Under $H_0$, each pair $(G_t,H_t)$ is exchangeable, meaning that swapping $G_t$ and $H_t$ does not change their joint distribution. The differences $D_t=G_t-H_t$ therefore keep their joint distribution when each one is multiplied by an independent random sign. Recomputing $U_2$ and $U_d$ on many sign-flipped copies of the data, and using the plus-one p-value of Appendix~\ref{app:oracle}, gives valid p-values for the unknown common null for any $n$ and $m$. The Bonferroni union of these p-values is a valid level-$\alpha$ test. We have not used this calibration in the reported experiments.

\paragraph{Limitations.}
Our results assume independent edges, aligned vertices, and a single edge-probability matrix within each group. They do not cover dependent edges, unmatched vertices, or latent graphon sampling. The theorem concerns a fixed finite $r$, and its constants may deteriorate as $r$ grows. The theorem thresholds are tuned to the sum of the two errors, so at the simulated sample sizes they can reject the null much more often than $5\%$. The normal calibration needs further assumptions that fail for sparse backgrounds. The simulations use four graph sizes and a finite number of replicates, so they illustrate the rate but cannot pin down an asymptotic exponent. The structured models serve as sanity checks.

\section{Conclusion}
We showed that a union of two published integer-norm tests, with thresholds balanced by interpolation, attains the conjectured sample complexity $n^{2/r}/\epsilon^2$ for non-integral $r>2$ in two-sample IER testing. The interpolation step removes the loss that comes from embedding every alternative into the next integer norm. The two tests hand over at a support of about $n/2$ edges. The sparse lower-bound construction places its perturbation at the same support size, suggesting that alternatives on about $n/2$ edges are the hard case for every non-integral $r>2$. Our simulations agree with the predicted scaling once all tests are compared at the same level.

\clearpage
\appendix
\section{Proof of Theorem~\ref{thm:main}}\label{app:proofs}
All norms below are unnormalized entrywise matrix norms. As in the main text, $M=n(n-1)/2$, $N=2M$, $h_e=p_e-q_e$, and $\|P-Q\|_r^r=2\sum_{e=1}^M|h_e|^r$. For a test $\phi$, the worst-case risk is
\[
 R(\phi)=\sup_{P=Q}\Prob_{P,Q}(\phi=1)
 +\sup_{\|P-Q\|_r\geq\epsilon}\Prob_{P,Q}(\phi=0),
 \qquad R^*=\inf_\phi R(\phi).
\]
The test that always accepts has $R=1$ whenever the alternative is nonempty, so $R^*\leq1$. An individual test can have a risk larger than one.

\paragraph{Moments of the integer statistic.}
Fix an integer $s\geq2$ and $m\geq s$, put $K=\lfloor m/s\rfloor$, and use $s$ disjoint blocks of $K$ graph pairs. The block means $Y_{\ell,e}$ have mean $h_e$ and variance
\[
 v_e=\{p_e(1-p_e)+q_e(1-q_e)\}/K\leq (2K)^{-1}.
\]
For even $s$ put $W_e=\prod_{\ell=1}^sY_{\ell,e}$, and for odd $s$ put $W_e=(\prod_{\ell=1}^{s-1}Y_{\ell,e})|Y_{s,e}|$. These are the block-sum statistics of \citet{chatterjee2023} divided by $K^s$. Because the blocks are independent,
\[
 \mathbb EW_e=h_e^s\quad(s\text{ even}),\qquad
 \mathbb EW_e=h_e^{s-1}\mathbb E|Y_{s,e}|\geq |h_e|^s
 \quad(s\text{ odd}),
\]
where the inequality uses Jensen's inequality, $\E|Y_{s,e}|\geq|h_e|$, and the fact that $s-1$ is even. For both parities the expectation is zero when $h_e=0$, and $\mathbb EW_e^2=(h_e^2+v_e)^s$. In both cases $(\E W_e)^2\geq h_e^{2s}$. For odd $s$ this follows from $(\E W_e)^2=h_e^{2(s-1)}(\E|Y_{s,e}|)^2$ and the same Jensen bound. The edge terms are independent, so $U_s=\sum_eW_e$ satisfies
\begin{align}
 \operatorname{Var}(U_s)
 &\leq\sum_e\{(h_e^2+v_e)^s-|h_e|^{2s}\}\\
 &\leq\sum_{j=1}^s\binom{s}{j}(2K)^{-j}
       \sum_e |h_e|^{2s-2j}.\label{eq:variance-bound}
\end{align}
Subtracting $h_e^{2s}$ is what removes the $j=0$ term, which would not become small. Let $S=\sum_e|h_e|^s>0$ and $q=2s-2j$. Monotonicity of $\ell_p$ norms when $q\geq s$, and H\"older's inequality when $0<q<s$, give
\[
 \sum_e|h_e|^q\leq M^{\max(1-q/s,0)}S^{q/s}.
\]
When $q=0$, the left side is exactly $M$, since it counts every edge, including those with $h_e=0$, and the inequality holds with equality.

\paragraph{Uniform error bounds.}
The test rejects when $U_s\geq\eta^s/4$ for some $\eta>0$. Under the null, $h_e=0$ for every edge, so $\E U_s=0$ and \eqref{eq:variance-bound} reduces to $\operatorname{Var}(U_s)\leq M(2K)^{-s}$. Chebyshev's inequality then gives
\[
 \sup_{P=Q}\Prob(U_s\geq\eta^s/4)
 \leq16M(2K)^{-s}\eta^{-2s}.
\]
Under an alternative with $\|P-Q\|_s\geq\eta$, we have $S=\|P-Q\|_s^s/2\geq\eta^s/2$. Since $\E U_s\geq S$, also $\E U_s-\eta^s/4\geq S/2$, so Chebyshev's inequality bounds the type~II error by $4\operatorname{Var}(U_s)/S^2$. By \eqref{eq:variance-bound} and the $\ell_p$ bound above, this is at most
\begin{align*}
 4\sum_{j=1}^s\binom{s}{j}(2K)^{-j}
 M^{\max(2j/s-1,0)}S^{-2j/s}
 &\leq4\sum_{j=1}^s\binom{s}{j}2^{2j/s-j}
 \left(\frac{M^{1/s}}{K\eta^2}\right)^j,
\end{align*}
where we used $S^{-2j/s}\leq2^{2j/s}\eta^{-2j}$, $\max(2j/s-1,0)\leq j/s$ for $1\leq j\leq s$, and $M\geq1$. The type~I bound equals $16\cdot2^{-s}t^s$ with $t$ below. Consequently, with the finite constant $C_s=16\cdot2^{-s}+4\max_{1\leq j\leq s}\binom sj2^{2j/s-j}$, which depends only on $s$,
\begin{equation}
 R(\psi_{s,\eta})\leq C_s\sum_{j=1}^s t^j,
 \qquad t=\frac{M^{1/s}}{K\eta^2}.\label{eq:uniform-integer}
\end{equation}
The bounds hold simultaneously over the whole null and alternative classes, and they allow the matrices and the separation to change with $n$. Alternatives exactly on the boundary are included.

\paragraph{Interpolation and sample costs.}
For $1\leq r<2$, H\"older's inequality gives $\|x\|_2\geq\epsilon N^{1/2-1/r}$ whenever $\|x\|_r\geq\epsilon$. The $s=2$ bound therefore costs $M^{1/2}N^{2/r-1}/\epsilon^2\asymp n^{4/r-1}/\epsilon^2$. For non-integral $r>2$, let $d=\lceil r\rceil$ and
\[
 \alpha=\frac{1/r-1/d}{1/2-1/d},\qquad
 \eta_2=\epsilon N^{1/4-1/(2r)},\qquad
 \eta_d=\epsilon N^{1/(2d)-1/(2r)}.
\]
H\"older's inequality applied to $|x_i|^{\alpha r}|x_i|^{(1-\alpha)r}$ with the conjugate exponents $2/(\alpha r)$ and $d/((1-\alpha)r)$ proves $\|x\|_r\leq\|x\|_2^\alpha\|x\|_d^{1-\alpha}$. The identity $\alpha/2+(1-\alpha)/d=1/r$ gives $\eta_2^\alpha\eta_d^{1-\alpha}=\epsilon$. If both branch norms were strictly below their thresholds, their weighted product would be strictly below $\epsilon$. Every alternative therefore belongs to at least one branch alternative, including the equality cases. Both branch costs satisfy
\[
 \frac{M^{1/2}}{\eta_2^2}=2^{-1/2}\frac{N^{1/r}}{\epsilon^2},
 \qquad
 \frac{M^{1/d}}{\eta_d^2}=2^{-1/d}\frac{N^{1/r}}{\epsilon^2}.
\]
Since $K_s=\lfloor m/s\rfloor\geq m/(2s)$ whenever $m\geq s$, and $N\leq n^2$, each branch has $t_s=M^{1/s}/(K_s\eta_s^2)\leq2s\cdot2^{-1/s}N^{1/r}/(m\epsilon^2)\leq2sA_r/m$, where $A_r=n^{2/r}/\epsilon^2$. We compute both statistics on the same observations, and each keeps its own independent blocks. The type~I error of the union is at most the sum of the branch type~I errors, and each alternative's acceptance probability is at most that of a branch whose alternative contains it. Hence, by \eqref{eq:uniform-integer},
\[
 R(\phi_{r,\epsilon})\leq R(\psi_{2,\eta_2})+R(\psi_{d,\eta_d})
 \leq C_2\sum_{j=1}^2\Big(\frac{4A_r}{m}\Big)^j+C_d\sum_{j=1}^d\Big(\frac{2dA_r}{m}\Big)^j,
\]
which tends to zero when $m/A_r\to\infty$, and no independence between the branches is used. In the same way, for $1\leq r<2$ the $s=2$ test has $t\leq4A_r/m$ with $A_r=n^{4/r-1}/\epsilon^2$, because $N^{2/r-1/2}\leq n^{4/r-1}$. For integral $r\geq2$ the single statistic of order $r$ with $\eta=\epsilon$ has $t\leq2rA_r/m$ with $A_r=n^{2/r}/\epsilon^2$, and $r=2$ gives the rate $n/\epsilon^2$. For comparison, the direct reduction to the ceiling norm has the sufficient cost $n^{4/r-2/d}/\epsilon^2$. This is only an upper bound on the sample size that this test needs.

\paragraph{Dense construction.}
Set $Q_e=1/2$ and $P_e=1/2+\gamma_e\delta$ with independent uniform signs $\gamma_e$ and $\delta=\epsilon N^{-1/r}$, and assume $\delta\leq1/2$. Every draw has norm exactly $\epsilon$. Write $A_{te}$ for the observation of edge $e$ in the $t$-th graph of the first group. Under the null these are independent Bernoulli$(1/2)$ variables. The second group has the same distribution under both hypotheses, so it contributes a likelihood ratio of one. For a fixed sign $\gamma\in\{-1,1\}$, edge $e$ contributes
\[
 \ell_{\gamma,e}=\prod_{t=1}^m[1+2\gamma\delta(2A_{te}-1)],
\]
and the likelihood ratio of the random construction is $L=\prod_e\tfrac12(\ell_{+1,e}+\ell_{-1,e})$. Because $\E_0(2A_{te}-1)=0$ and $(2A_{te}-1)^2=1$, independence gives $\mathbb E_0\ell_{\gamma,e}\ell_{\gamma',e}=(1+4\gamma\gamma'\delta^2)^m$. Averaging over the four sign pairs and multiplying over the $M$ independent edges gives
\[
 \mathbb E_0L^2=\left\{\frac{(1+4\delta^2)^m+(1-4\delta^2)^m}{2}\right\}^M.
\]
For $x\geq0$, the average $\{(1+x)^m+(1-x)^m\}/2$ keeps only the even terms of the binomial expansion, so it is at most $\sum_{j\geq0}(mx)^{2j}/(2j)!=\cosh(mx)\leq\exp(m^2x^2/2)$, where the last inequality follows by integrating $\tanh u\leq u$ for $u\geq0$. With $x=4\delta^2$ this gives $\log\mathbb E_0L^2\leq8Mm^2\delta^4$. For $1\leq r\leq2$ and $A_r=n^{4/r-1}/\epsilon^2$,
\[
 \delta^2=\frac{1}{nA_r}\left(\frac n{n-1}\right)^{2/r},\qquad
 8Mm^2\delta^4=4\Big(\frac n{n-1}\Big)^{4/r-1}\Big(\frac m{A_r}\Big)^2\leq D_r(m/A_r)^2,
\]
with $D_r=2^{1+4/r}$, for all $n\geq2$.

\paragraph{Sparse construction.}
For $r\geq2$, choose a uniformly random support of $k=\lfloor n/2\rfloor$ of the $M$ unordered edges, and set $P_e=1/2+\delta$ on the support and $P_e=1/2$ elsewhere, where $\delta=\epsilon(2k)^{-1/r}\leq1/2$. Again every draw has norm exactly $\epsilon$. The likelihood ratio is $L=\E_S\prod_{e\in S}\ell_{+1,e}$, an average over the random support $S$. Since $\E_0\ell_{+1,e}=1$ and $\E_0\ell_{+1,e}^2=(1+4\delta^2)^m$, two independent supports $S$ and $S'$ that share $Z$ edges give
\[
 \mathbb E_0L^2=\mathbb E a^Z,\qquad a=(1+4\delta^2)^m\geq1.
\]
For $0\leq j\leq k$, counting the $j$-subsets that lie in both supports gives
\[
 \mathbb E\binom Zj=\frac{\binom kj^2}{\binom Mj}
 =\binom kj\frac{(k)_j}{(M)_j}\leq\binom kj(k/M)^j,
\]
where $(b)_j$ is the falling factorial and each ratio $(k-i)/(M-i)$ is at most $k/M$. Expanding $a^Z=(1+(a-1))^Z=\sum_j\binom Zj(a-1)^j$ and using $1+y\leq e^y$ then proves, without any concentration argument,
\[
 \mathbb E a^Z\leq[1+(k/M)(a-1)]^k
 \leq\exp\{(k^2/M)(a-1)\}.
\]
For $A_r=n^{2/r}/\epsilon^2$ we have $\delta^2=A_r^{-1}(n/(2k))^{2/r}\leq B_r/A_r$ with $B_r=2^{2/r}$ for all $n\geq2$, and $k^2/M\leq1$. Hence $\log\E_0L^2\leq(k^2/M)(a-1)\leq a-1\leq e^{4m\delta^2}-1$, and keeping this nonlinear bound gives
\begin{equation}
 \log\mathbb E_0L^2\leq\exp(4B_rm/A_r)-1,\label{eq:sparse-nonlinear}
\end{equation}
which is useful even when $m/A_r$ is a fixed positive constant.

\paragraph{Quantifiers and fixed error.}
Consider any sequence $n\to\infty$ and $\epsilon_n>0$ with $A_r=n^{\max(4/r-1,2/r)}/\epsilon_n^2\to\infty$, for a fixed finite $r\geq1$. The formulas for $\delta^2$ above show that the relevant construction is valid, that is, $\delta\leq1/2$, for all large $n$, so the alternative is nonempty there. Write $P_0$ for the distribution of the data under the null and $P_\pi$ for its distribution under the random construction. The worst-case risk of every test $\phi$ is at least $P_0(\phi=1)+P_\pi(\phi=0)\geq1-\operatorname{TV}(P_0,P_\pi)$, and the Cauchy--Schwarz inequality gives $\operatorname{TV}(P_0,P_\pi)=\tfrac12\E_0|L-1|\leq\tfrac12\sqrt{\E_0L^2-1}$. Hence
\[
 R^*\geq1-\operatorname{TV}(P_0,P_\pi)
 \geq1-\tfrac12\sqrt{\mathbb E_0L^2-1}.
\]
Together with $R^*\leq1$, either construction proves $R^*\to1$ whenever $m/A_r\to0$. The upper bounds prove $R^*\to0$ whenever $m/A_r\to\infty$. For a fixed $\beta\in(0,1)$, choose $c_{r,\beta}>0$ small enough that
\[
 \tfrac12\sqrt{e^{D_rc_{r,\beta}^2}-1}<1-\beta
 \quad(r\leq2),\qquad
 \tfrac12\sqrt{\exp\{e^{4B_rc_{r,\beta}}-1\}-1}<1-\beta
 \quad(r\geq2).
\]
Both second-moment bounds increase with $m$, so every integer $m\leq c_{r,\beta}A_r$ has $R^*>\beta$ for all large $n$. Conversely, the bounds $t_s\leq2sA_r/m$ above show that the polynomial risk bounds of the one or two integer tests sum to at most $\beta$ at $m=\lceil C_{r,\beta}A_r\rceil$ once $C_{r,\beta}$ is large enough. Because $A_r\to\infty$, these sample sizes eventually meet every fixed block-count requirement. Hence $c_{r,\beta}A_r\leq m^*_\beta\leq(C_{r,\beta}+1)A_r$ for all large $n$, where $m^*_\beta=\min\{m\geq1:R^*\leq\beta\}$. The argument is for a fixed finite $r$ and does not track sharp constants.

\section{Simulation details}\label{app:experiments}
\FloatBarrier
\subsection{Exact block-count simulation}
For a segment of $c$ observations and an unordered edge $e$, the simulator draws independent counts $X_e\sim\operatorname{Binomial}(c,p_e)$ and $Z_e\sim\operatorname{Binomial}(c,q_e)$. Counts from disjoint segments and different edges are independent. Adding segment counts gives exactly the block counts that each statistic needs, and dividing by the block size gives its block means. Because all statistics aggregate the same segment counts, their joint distribution is the same as with explicit graph sampling. The reduction is exact under the IER model.

The implementation stores only unordered edges. It computes the matrix norm with the factor two in~\eqref{eq:norm} and sums each statistic once per unordered edge. Before a simulation starts, it checks that all probabilities lie in $[0,1]$ and that the achieved norm equals the target, and it never runs a statistic with fewer graphs than it has blocks. Counts and replicates can be processed in chunks without changing the model. Pilot runs, calibration, and evaluation use separate random seeds.

\FloatBarrier
\subsection{Practical variance estimator}
For an integer $s\geq2$ and $m'=2sC$ graphs per group, let $K=2C$. Split the first group into $2s$ consecutive blocks of $C$ graphs and define
\begin{equation}\label{eq:phat}
 \widehat p_{\ell,e}=\frac1C\sum_{t=(\ell-1)C+1}^{\ell C}G_{t,e},\qquad
 \widehat V_s=\left(\frac2K\right)^s\sum_{e\in E_n}
 \left\{\prod_{\ell=1}^s\widehat p_{\ell,e}
 \prod_{\ell=s+1}^{2s}(1-\widehat p_{\ell,e})\right\}.
\end{equation}
This is Eq.~(13) of \citet{chatterjee2023}, rescaled from their block-sum statistic to $U_s$. Under $P=Q$ it is a nonnegative, unbiased estimate of $\operatorname{Var}(U_s)$. The numerator uses $s$ blocks of size $K$ on the same $m'$ graphs. When $m$ is not divisible by $2s$, the implementation uses $m'=2s\lfloor m/(2s)\rfloor$ for both the numerator and the variance estimate.

The normal approximation of \citet{chatterjee2023} requires a fixed $s$, $\norm{P^{(n)}}{s}\to\infty$, and $\sup_n\norm{P^{(n)}}{\infty}<1$. It leads to the two-sided rule
\[
 |U_s|/\sqrt{\widehat V_s}>\Phi^{-1}(1-\alpha_{\rm test}/2).
\]
For the union of two statistics we use a Bonferroni split, replacing $\alpha_{\rm test}$ by $\alpha_{\rm test}/2$ in each, which gives the critical value $\Phi^{-1}(1-\alpha_{\rm test}/4)$. The Bonferroni inequality holds even though the two statistics are dependent. When $\widehat V_s=0$ the standardized statistic is undefined, and we record the decision as undefined rather than as an acceptance. A union that involves an undefined statistic is also undefined. We report rejection rates among the defined decisions together with the fraction of undefined ones, so these rates are conditional on a defined standardization.

For integer $r>2$, the practical and oracle unions combine orders $2$ and $r$, whereas the theorem uses order $r$ alone. These unions are therefore additional comparisons, and the individual order-$r$ results give the direct calibration comparison.

The normal approximation can fail for sparse graphs. For a planted bisection with probabilities proportional to $1/n$, $\norm{P}{s}^s=O(n^{2-s})$, which stays bounded for $s\geq2$. The conditions above therefore fail, even when a finite-$n$ histogram looks roughly normal.

\FloatBarrier
\subsection{Alternatives and structured models}\label{app:structured}
For the dense construction, $Q_e=1/2$ and $P_e=1/2+\gamma_e\epsilon N^{-1/r}$. For a hidden support of size $k$, we perturb $k$ uniformly chosen unordered edges by $\epsilon(2k)^{-1/r}$, with positive or independent random signs as two separate families. Each construction has exactly the requested norm when it is feasible, that is, when all probabilities stay in $[0,1]$. The support study changes $k$ with the norm held fixed, so the shift per edge grows as $k$ shrinks.

The structured models are defined by their probability matrices. Erd\H{o}s--R\'enyi graphs have a common off-diagonal probability. A planted bisection has two fixed communities of equal size, with within-community probability $a$ and between-community probability $b$. The logistic $\beta$-model has $p_{ij}=\operatorname{logistic}(\beta_i+\beta_j)$ for $i\neq j$. The $\beta$ vector and the community assignment stay fixed across the graphs of a replicate. A perturbation of the latent $\beta$ vector does not directly fix the norm of the induced probability matrix, so we calibrate the induced matrices to the target norm.

The structured comparisons use $n=50$, $m=48$, $r\in\{2,2.5,3.5,4\}$, and $\epsilon\in\{0.25,0.5,1\}$. The Erd\H{o}s--R\'enyi null has $q_e=0.3$. The bisection null has within-community probability $0.3$ and between-community probability $0.1$, and the alternative raises the first and lowers the second by the same amount. The $\beta$-model fixes $\beta_i$ on an equally spaced grid from $-1$ to $1$, sets $q_{ij}=\operatorname{logistic}(\beta_i+\beta_j)$, and solves for a common latent shift that yields the requested matrix norm. Two bisection alternatives are infeasible, and we omit them together with their paired null cells, which leaves 68 cells, each with 500 evaluation and 999 calibration replicates. These settings adapt the examples of \citet{chatterjee2023} rather than reproduce their Figure~2.

We also reproduce, with fewer repetitions, their fourth-norm null experiment (their Figure~1) at $n=100$, $m=32$, and $s=4$. The Erd\H{o}s--R\'enyi probability is $1/2$, and the bisection probabilities are $8/n$ and $2/n$. The $\beta$ vector is drawn uniformly from the unit sphere once per replicate and kept fixed across that replicate's graphs. We use 200 repetitions per model and follow the definitions in the published paper.

\FloatBarrier
\subsection{Oracle calibration}\label{app:oracle}
For each order $s$, the oracle calibrates the raw statistic $T=U_s$, with $s$ consecutive blocks of size $\lfloor m/s\rfloor$ as in Section~\ref{sec:construction}. It rejects in the upper tail, whereas the practical normal calibration is two-sided and uses an estimated standard deviation. A single statistic uses level $0.05$, and the union gives level $0.05/J$ to each of its $J$ orders, $\{2,\lceil r\rceil\}$ without duplicates. With $B$ independent calibration statistics $T_1,\ldots,T_B$ simulated under the null, an evaluation statistic $T$ receives the Monte Carlo p-value
\[
 p_{\rm MC}=\frac{1+\sum_{b=1}^B\ind\{T_b\geq T\}}{B+1}.
\]
The statistics of a union share their calibration samples, which Bonferroni allows, and calibration and evaluation samples are independent. Exchangeability makes this p-value valid under the specified null, including ties. For one realized calibration sample, the conditional rejection probability can differ from the nominal level.

Let $\alpha_{\rm branch}$ be the level given to one statistic, that is, $0.05$ or $0.05/J$, and let $q=\lfloor\alpha_{\rm branch}(B+1)\rfloor$. When $q=0$ the rule never rejects. For $q\geq1$ it rejects strictly above the $(B-q+1)$st smallest calibration statistic. The Dvoretzky--Kiefer--Wolfowitz inequality then shows that, with probability at least $0.95$ over the calibration sample, the conditional null rejection probability of one statistic is at most
\[
 (q-1)/B+\sqrt{\log(40)/(2B)}.
\]
This bound allows atoms but is conservative: for $B=999$ and level $0.025$ it is about $0.067$. It holds for one statistic at a time, not simultaneously over statistics or cells. Table~\ref{tab:calsens} adds a bootstrap sensitivity analysis that resamples the calibration and evaluation replicates independently at four prespecified settings.

\begin{table}[ht]\centering
\caption{Sensitivity of the oracle union to the calibration sample at $n=256$ and $\epsilon=0.75$. Each range is the 2.5th--97.5th percentile over 400 bootstrap resamples of independent calibration and evaluation replicates, keeping the two statistics paired. The ranges describe variability and are not exact confidence intervals.}\label{tab:calsens}
\begin{tabular}{rrlc}\toprule $r$ & $m$ & Model & Rejection-rate range\\\midrule
2.5 & 300 & null & [0.025, 0.060] \\
2.5 & 300 & sparse & [0.390, 0.516] \\
2.5 & 1201 & null & [0.031, 0.076] \\
2.5 & 1201 & sparse & [1.000, 1.000] \\
3.5 & 85 & null & [0.036, 0.074] \\
3.5 & 85 & sparse & [0.280, 0.452] \\
3.5 & 338 & null & [0.048, 0.085] \\
3.5 & 338 & sparse & [1.000, 1.000] \\
\bottomrule\end{tabular}\end{table}

\FloatBarrier
\subsection{Uncertainty and reproducibility}
A rejection count $x$ out of $B$ independent evaluation replicates estimates a probability by $\widehat p=x/B$. Pointwise Wilson intervals use $z=\Phi^{-1}(0.975)$, with center and half-width
\[
 \frac{\widehat p+z^2/(2B)}{1+z^2/B},\qquad
 \frac{z}{1+z^2/B}\sqrt{\frac{\widehat p(1-\widehat p)}B+\frac{z^2}{4B^2}}.
\]
These intervals describe binomial Monte Carlo error. When two methods are applied to the same data, we compare them through their paired rejection indicators rather than as independent samples. Oracle calibration adds quantile uncertainty that the evaluation intervals do not include. A transition point is reported only when the grid brackets it, and it is marked as censored otherwise.

The main and structured studies use 212,000 evaluation and 391,608 calibration replicates. All simulations ran on the CPU of a single Apple Silicon Mac with 16~GB of memory, using one BLAS thread per worker. The recorded run times of all cells, including calibration, sum to about 45 minutes. For every cell we saved the configuration, the random seeds, and the per-replicate statistics and decisions, together with the figure scripts and pinned dependency versions. Small exact checks of likelihood ratios and statistics supplement the proofs but do not replace them.

\section{Additional results}\label{app:results}
\FloatBarrier
\subsection{The theorem test at its own thresholds}
Figure~\ref{fig:theorem-scaling} repeats Figure~\ref{fig:scaling} for the theorem test, whose thresholds depend on $\epsilon$. Its null rejection is high for small $\tau$ and falls to nearly zero for $\tau\geq8$, while its power rises to one. Figure~\ref{fig:theorem-scaling-m} shows the same data against $m$ instead of $\tau$, and Figure~\ref{fig:errsum} shows the sum of the null rejection rate and the miss rate.

\begin{figure}[ht]
\centering\includegraphics[width=\linewidth]{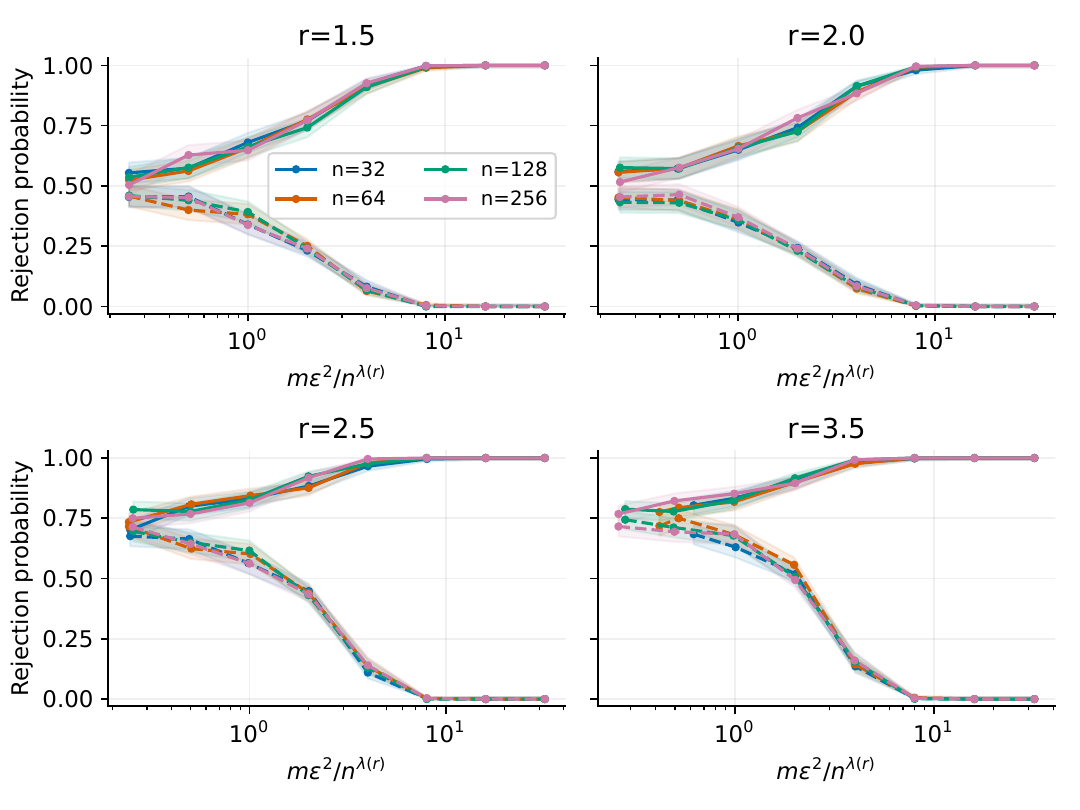}
\caption{Theorem test at its own thresholds, with $\epsilon=0.75$. Solid lines show power under the dense construction ($r\leq2$) or the hidden support of $k=\lfloor n/2\rfloor$ edges ($r>2$), averaged over the construction. Dashed lines show the null $P=Q=1/2$. Colors identify $n$. The horizontal axis uses the actual rounded $m$, and bands are pointwise 95\% Wilson intervals over 500 replicates (1,000 for the null at the grid points nearest $\tau=2$ and $8$).}
\label{fig:theorem-scaling}
\end{figure}

\begin{figure}[ht]
\centering\includegraphics[width=\linewidth]{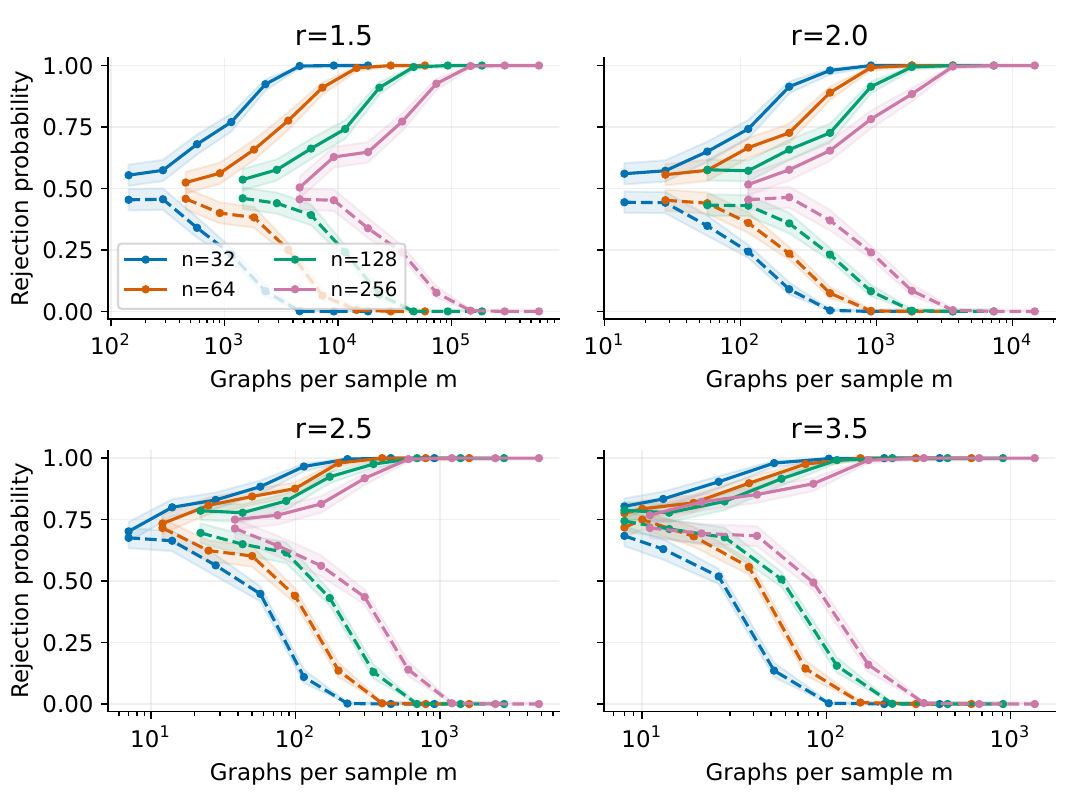}
\caption{The data of Figure~\ref{fig:theorem-scaling} plotted against the number of graphs per group $m$.}
\label{fig:theorem-scaling-m}
\end{figure}

\begin{figure}[ht]
\centering\includegraphics[width=\linewidth]{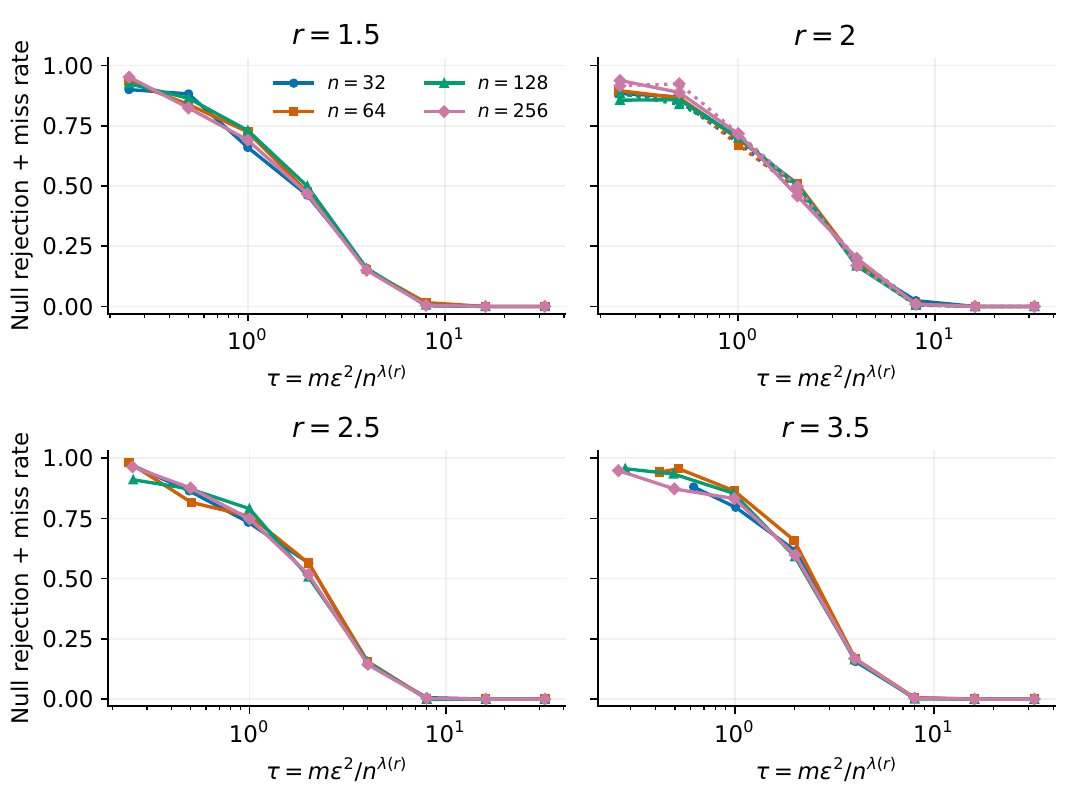}
\caption{Null rejection rate plus miss rate of the theorem test, for the cells of Figure~\ref{fig:theorem-scaling}. Colors and markers identify $n$, and at $r=2$ dotted lines also show the sparse construction. These sums refer to the two simulated models, whereas the worst-case risk~\eqref{eq:risk} takes a supremum over all models.}
\label{fig:errsum}
\end{figure}

\FloatBarrier
\subsection{Prespecified transition analysis}\label{app:transitions}
Before running the study we planned to track the power of the theorem test at its own thresholds and to locate where it first reaches $0.8$. After a weighted isotonic fit, the grid brackets this target in 19 of 20 size and construction series. We do not extrapolate, so the remaining series is censored. Each series uses 400 bootstrap repetitions of complete replicates, and we withhold the interval whenever any bootstrap crossing is censored. Table~\ref{tab:transitions} reports the fitted slopes. For $r=2.5$ and $3.5$, the power of the theorem test is already near $0.8$ at the smallest $\tau$, because its null rejection there is high (Figure~\ref{fig:theorem-scaling}). This explains the slope of $1.066$ at $r=2.5$ and the censoring at $r=3.5$. The equal-size analysis in Table~\ref{tab:slopes} avoids this problem.

\begin{table}[ht]\centering
\caption{Prespecified analysis: slope of $\log m^*$ against $\log n$ over $n=32,64,128,256$, where $m^*$ is the number of graphs at which the power of the theorem test reaches $0.8$ ($\epsilon=0.75$). A slope or interval is withheld when a crossing is censored.}\label{tab:transitions}
\begin{tabular}{rlccc}\toprule $r$ & Construction & Finite crossings & Slope & Bootstrap interval\\\midrule
1.5 & dense & 4/4 & 1.679 & [1.590, 1.774] \\
2 & dense & 4/4 & 0.945 & [0.846, 1.036] \\
2 & sparse & 4/4 & 1.014 & [0.930, 1.104] \\
2.5 & sparse & 4/4 & 1.066 & withheld \\
3.5 & sparse & 3/4 & censored & withheld \\\bottomrule\end{tabular}\end{table}

\FloatBarrier
\subsection{Support study at the theorem thresholds}
Figure~\ref{fig:theorem-support} repeats Figure~\ref{fig:branches} at the theorem thresholds and adds the direct reductions to $L_2$ (threshold $\eta=\epsilon$) and to $L_d$ (threshold $\eta=\epsilon N^{1/d-1/r}$). The labels ``Union L2 branch'' and ``Union high branch'' refer to the two statistics of the union at the union thresholds. Table~\ref{tab:support-null} gives the null rejection rates at these sample sizes, which differ widely between methods, and Figure~\ref{fig:decomposition} shows which statistic triggers each rejection of the union. The published sufficient sample size for the direct ceiling reduction is an upper bound, and it does not predict how that test's power changes with $k$.

\begin{figure}[ht]
\centering\includegraphics[width=\linewidth]{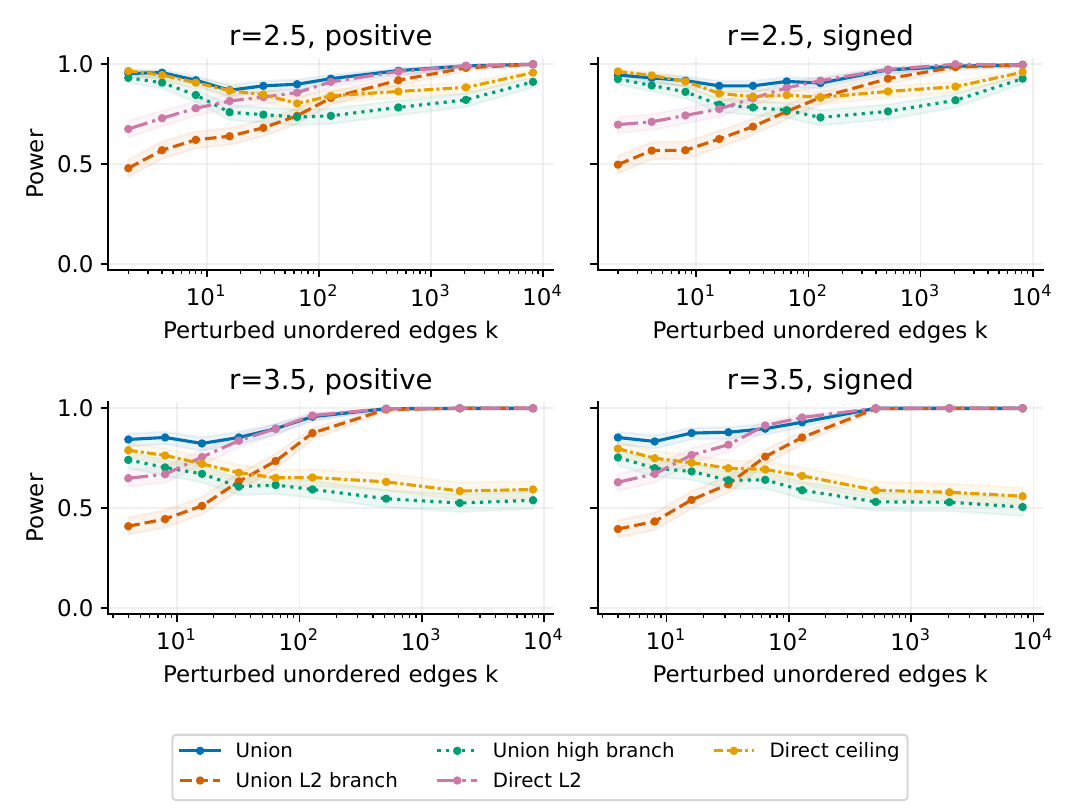}
\caption{Power at the theorem thresholds against the number $k$ of perturbed edges, at $n=128$, $\epsilon=0.75$, $m=172$ ($r=2.5$) and $m=57$ ($r=3.5$). The union rejects the null in $43.1\%$ and $50.7\%$ of replicates at these settings. Each cell has 500 replicates, and bands are pointwise 95\% Wilson intervals.}
\label{fig:theorem-support}
\end{figure}

\begin{table}[ht]
\centering
\caption{Null rejection rates at the sample sizes of the support study ($n=128$, $P=Q=1/2$, $\epsilon=0.75$). All methods in a row block use the same observations. Intervals are pointwise 95\% Wilson intervals.}\label{tab:support-null}
\begin{tabular}{rrlcc}\toprule
$r$ & $m$ & Method & Rejections & Interval\\\midrule
2.5 & 172 & Union & 431/1000 & [0.401, 0.462] \\
2.5 & 172 & Union, $L_2$ statistic & 230/1000 & [0.205, 0.257] \\
2.5 & 172 & Union, $L_3$ statistic & 293/1000 & [0.266, 0.322] \\
2.5 & 172 & Direct $L_2$ & 382/1000 & [0.352, 0.413] \\
2.5 & 172 & Direct ceiling & 424/1000 & [0.394, 0.455] \\
3.5 & 57 & Union & 507/1000 & [0.476, 0.538] \\
3.5 & 57 & Union, $L_2$ statistic & 222/1000 & [0.197, 0.249] \\
3.5 & 57 & Union, $L_4$ statistic & 370/1000 & [0.341, 0.400] \\
3.5 & 57 & Direct $L_2$ & 455/1000 & [0.424, 0.486] \\
3.5 & 57 & Direct ceiling & 435/1000 & [0.405, 0.466] \\
\bottomrule\end{tabular}
\end{table}

\begin{figure}[ht]
\centering\includegraphics[width=\linewidth]{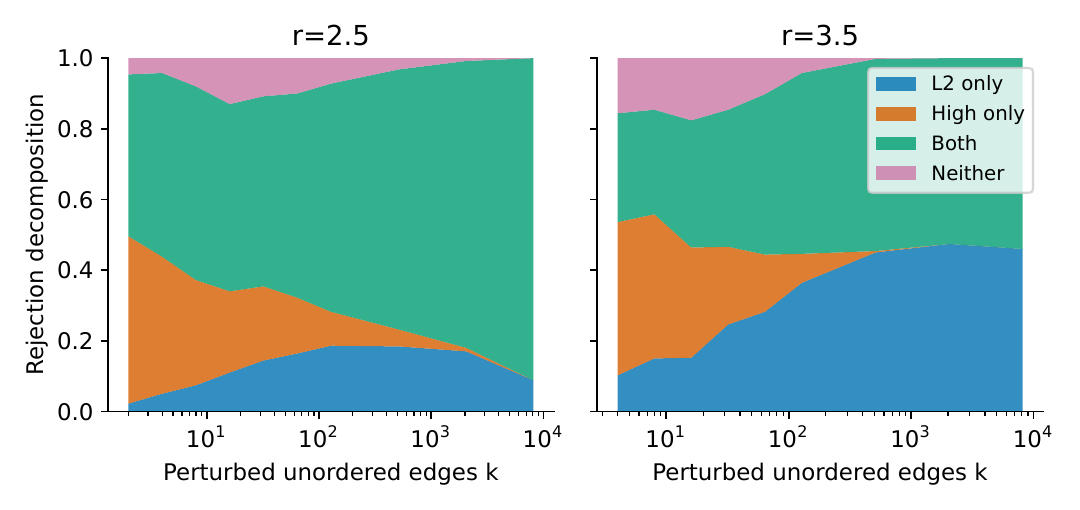}
\caption{Which statistic triggers each rejection of the union at the theorem thresholds, for positive shifts at $n=128$, $\epsilon=0.75$, and the sample sizes of Table~\ref{tab:support-null}. Colors show rejections by the $L_2$ statistic only, by the higher-order statistic only, by both, and by neither. Each support size has 500 replicates.}
\label{fig:decomposition}
\end{figure}

\FloatBarrier
\subsection{Splitting the observations between the two statistics}
Instead of computing both statistics on all $m$ graphs, one can give the first $\lfloor m/2\rfloor$ graphs to the $L_2$ statistic and the remaining $m-\lfloor m/2\rfloor$ graphs to the $L_d$ statistic. The thresholds stay the same, and each statistic uses its actual block count. The union on shared data gives each statistic all $m$ graphs, whereas the split version gives each statistic about half of them. Figure~\ref{fig:split} compares the two versions on the sparse construction. Because they use different amounts of data per statistic, their null rejection rates also differ.

\begin{figure}[ht]
\centering\includegraphics[width=\linewidth]{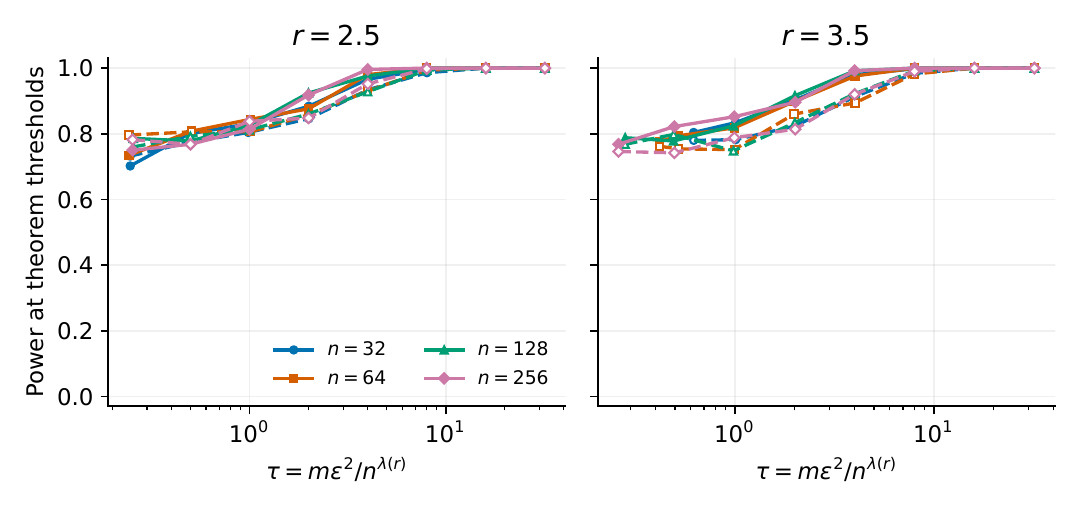}
\caption{Power at the theorem thresholds of the union on shared data (solid lines, filled markers) and on split data (dashed lines, open markers) under the sparse construction, with $\epsilon=0.75$ and $r=2.5,3.5$. Colors and markers identify $n$. Each cell has 500 replicates.}
\label{fig:split}
\end{figure}

\FloatBarrier
\subsection{Numerical lower bounds}
Figure~\ref{fig:lowerbounds} evaluates the lower bound of Section~\ref{sec:lower} numerically, using the exact second moments rather than their upper bounds.

\begin{figure}[ht]
\centering\includegraphics[width=\linewidth]{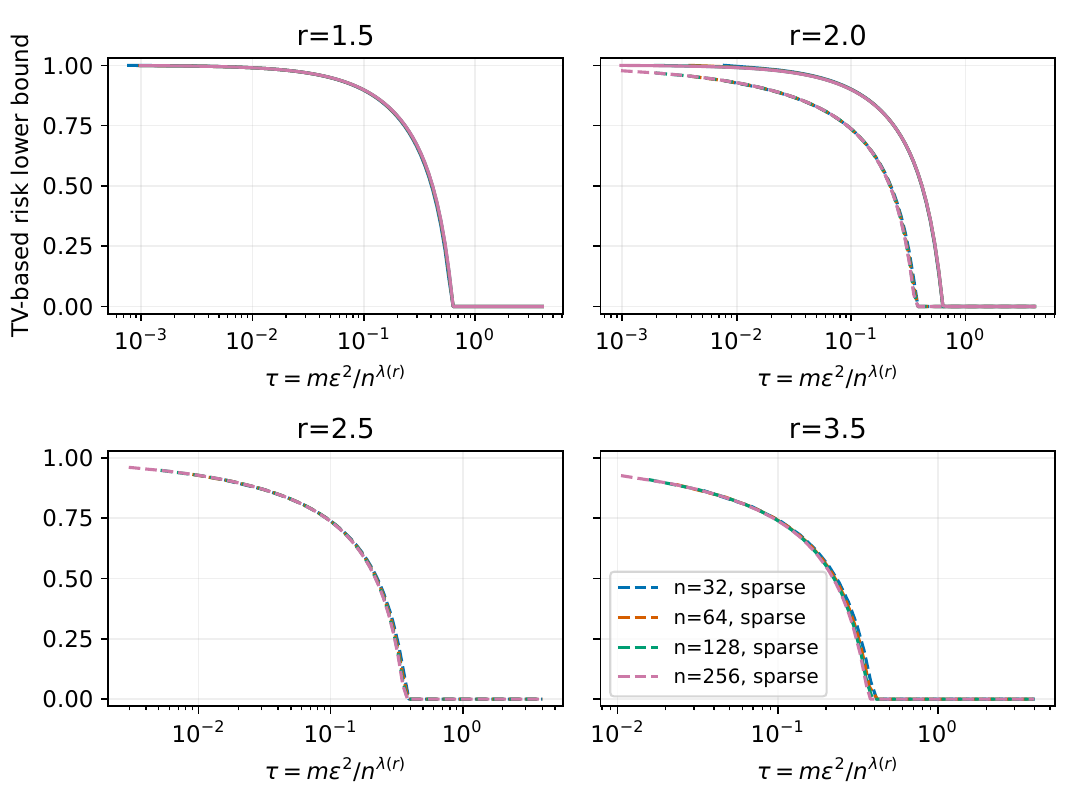}
\caption{The total-variation lower bound $1-\frac12\sqrt{\E_0L^2-1}$ on the minimax risk, evaluated numerically for $\epsilon=0.5$ and $n\in\{32,64,128,256\}$ at the actual rounded sample sizes. Solid curves use the exact second moment of the dense construction, and dashed curves the exact hypergeometric moment of the sparse construction with $k=\lfloor n/2\rfloor$. Both appear at $r=2$. All 2,000 values are computed in log space, and none of them exceeds its analytic upper bound.}
\label{fig:lowerbounds}
\end{figure}

\FloatBarrier
\subsection{Structured models}
Figure~\ref{fig:structured} compares the theorem test with the oracle-calibrated union on the structured models of Appendix~\ref{app:structured}, and Figure~\ref{fig:structured-null} shows their null rejection rates. The theorem thresholds depend on $\epsilon$, so the theorem test's null rejection changes with $\epsilon$ even though the null does not. The two figures show $r=2$, $2.5$, and $3.5$. The $r=4$ cells, where the theorem uses the single $L_4$ statistic, enter only the calibration summary in Table~\ref{tab:practical}. Figure~\ref{fig:reference-null} shows the reduced reproduction of the fourth-norm null experiment of \citet{chatterjee2023}.

\begin{figure}[ht]
\centering\includegraphics[width=\linewidth]{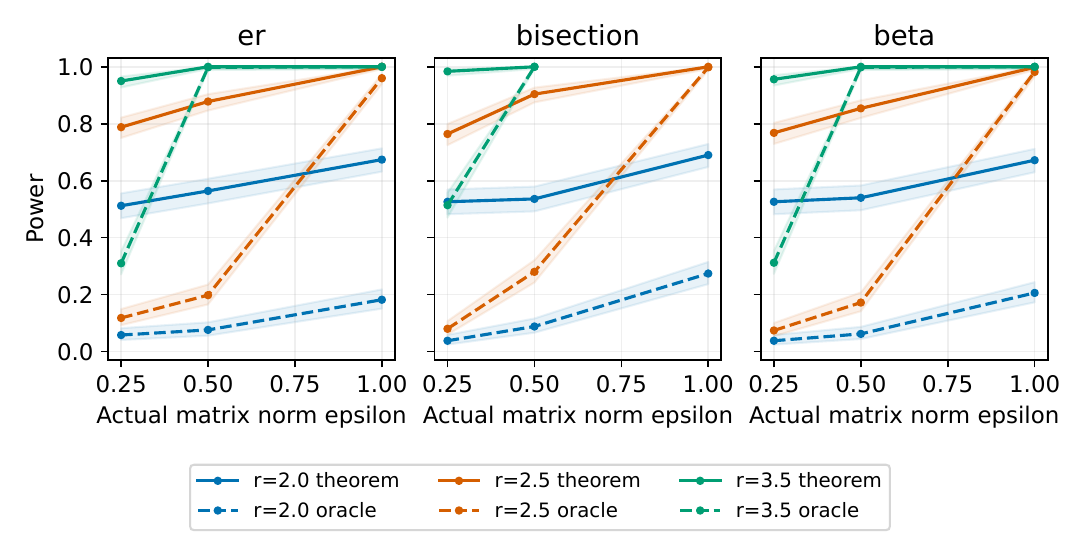}
\caption{Power on the structured models at $n=50$, $m=48$, and $\epsilon\in\{0.25,0.5,1\}$. Solid lines use the theorem test (a union for non-integral $r>2$) and dashed lines the oracle-calibrated union. Colors identify $r$. Each cell has 500 evaluation and 999 calibration replicates, and bands are pointwise 95\% Wilson intervals, conditional on the calibration sample for the oracle.}
\label{fig:structured}
\end{figure}

\begin{figure}[ht]
\centering\includegraphics[width=\linewidth]{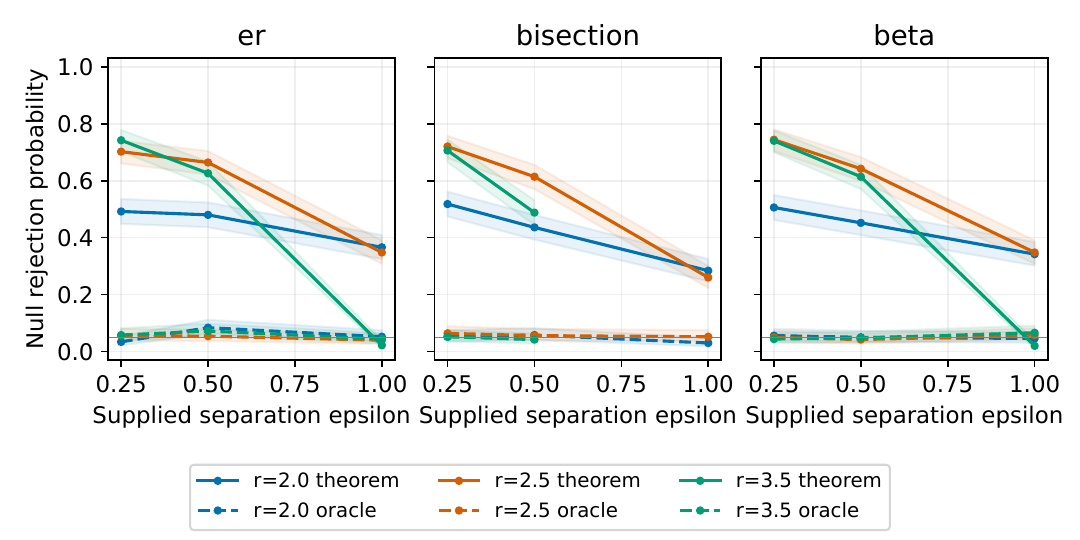}
\caption{Null rejection rates on the structured models at $n=50$, $m=48$. Solid lines show the theorem test and dashed lines the oracle-calibrated union, with colors for $r=2,2.5,3.5$. Each cell uses 500 evaluation and 999 calibration replicates, and bands are pointwise 95\% Wilson intervals. The horizontal line marks $0.05$.}
\label{fig:structured-null}
\end{figure}

\begin{figure}[ht]
\centering\includegraphics[width=\linewidth]{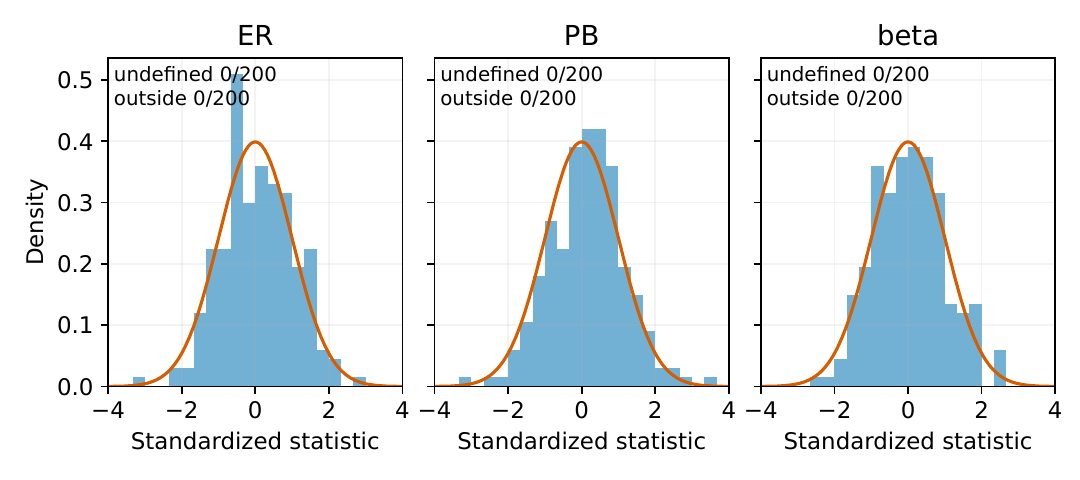}
\caption{Standardized statistic $U_4/\sqrt{\widehat V_4}$ under the null at $n=100$, $m=32$, with 200 replicates per model. Histograms are normalized by the number of finite values, and undefined or out-of-range values are counted separately. The orange curve is the standard normal density. The planted bisection with probabilities $8/n$ and $2/n$ violates the norm-divergence condition of the normal approximation.}
\label{fig:reference-null}
\end{figure}

\FloatBarrier
\subsection{Practical normal calibration}
Table~\ref{tab:practical} summarizes the null rejection rates of the practical normal calibration over all null cells of the main and structured studies. The ranges cover different sample sizes and backgrounds.

\begin{table}[ht]\centering
\caption{Null rejection rates of the practical normal calibration at nominal level $0.05$, over all null cells of the main and structured studies. Rates are computed among the defined decisions, and the last column counts undefined decisions among all replicate evaluations.}\label{tab:practical}
\begin{tabular}{rlrcc}\toprule
$r$ & Method & Cells & Observed range & Undefined/total\\\midrule
1.5 & $L_2$ & 32 & 0.034--0.062 & 0/20000 \\
1.5 & union & 32 & 0.034--0.062 & 0/20000 \\
2 & $L_2$ & 41 & 0.034--0.066 & 0/24500 \\
2 & union & 41 & 0.034--0.066 & 0/24500 \\
2.5 & $L_2$ & 41 & 0.032--0.080 & 0/24500 \\
2.5 & $L_3$ & 41 & 0.034--0.068 & 1/24500 \\
2.5 & union & 41 & 0.034--0.074 & 1/24500 \\
3.5 & $L_2$ & 39 & 0.016--0.078 & 0/23500 \\
3.5 & $L_4$ & 39 & 0.030--0.078 & 144/23500 \\
3.5 & union & 39 & 0.036--0.082 & 144/23500 \\
4 & $L_2$ & 8 & 0.030--0.062 & 0/4000 \\
4 & $L_4$ & 8 & 0.034--0.060 & 0/4000 \\
4 & union & 8 & 0.038--0.056 & 0/4000 \\
\bottomrule\end{tabular}\end{table}

The pooled fraction of undefined decisions hides that they occur only at the smallest sample sizes. At $r=3.5$ and $n=32$, the union is undefined in $78/500=15.6\%$ of null replicates for $m=8$ and in $66/500=13.2\%$ for $m=13$, and these two cells account for all 144 undefined decisions at $r=3.5$. Among the defined decisions, the union rejects in $30/422$ and $30/434$ replicates, respectively. The theorem test does not standardize its statistics and is unaffected.

\end{document}